\newtheorem{lemma}{Lemma}
\newtheorem{theorem}{Theorem}
\newtheorem{corollary}{Corollary}
\newtheorem*{rep@theorem}{\rep@title}
\newcommand{\newreptheorem}[2]{%
\newenvironment{rep#1}[1]{%
 \def\rep@title{#2 \ref{##1}}%
 \begin{rep@theorem}}%
 {\end{rep@theorem}}}
\DeclareMathOperator*{\argmin}{\text{argmin}}
\DeclareMathOperator*{\diag}{\text{diag}}
\newcommand{\bR}{\mathbb{R}}
\newcommand{\cA}{\mathcal{A}}
\newcommand{\cK}{\mathcal{K}}
\newcommand{\cR}{\mathcal{R}}
\newcommand{\cO}{\mathcal{O}}
\newcommand{\comments}[1]{}
\newcommand{\ignore}[1]{}
\renewcommand{\phi}{\varphi}
\newcommand{\del}{\partial}
\newcommand{\bE}{\mathbb{E}}
\newcommand{\Reg}{\mathrm{Reg}}
\title{Accelerating Optimization \\ via Adaptive Prediction}
\author{
Mehryar Mohri\\
Courant Institute and Google\\
251 Mercer Street\\
New York, NY 10012 \\
\texttt{mohri@cims.nyu.edu} \\
\and
Scott Yang \\
Courant Institute \\
251 Mercer Street \\
New York, NY 10012 \\
\texttt{yangs@cims.nyu.edu} \\
}
\begin{document}

\maketitle

\begin{abstract}
  We present a powerful general framework for designing data-dependent
  optimization algorithms, building upon and unifying recent
  techniques in adaptive regularization, optimistic gradient
  predictions, and problem-dependent randomization. We first present a
  series of new regret guarantees that hold at any time and under very
  minimal assumptions, and then show how different relaxations recover
  existing algorithms, both basic as well as more recent sophisticated
  ones. Finally, we show how combining adaptivity, optimism, and
  problem-dependent randomization can guide the design of
  algorithms that benefit from more favorable guarantees than recent
  state-of-the-art methods.
\end{abstract}

\section{Introduction}
\label{sec:intro}

Online convex optimization algorithms represent key tools in modern
machine learning.  These are flexible algorithms used for solving a
variety of optimization problems in classification, regression,
ranking and probabilistic inference.  These algorithms typically
process one sample at a time with an update per iteration that is
often computationally cheap and easy to implement. As a result, they
can be substantially more efficient both in time and space than
standard batch learning algorithms, which often have optimization costs
that are prohibitive for very large data sets.

In the standard scenario of online convex optimization
\citep{Zinkevich2003}, at each round $t = 1, 2, \ldots$, the learner
selects a point $x_t$ out of a compact convex set $\cK$ and incurs
loss $f_t(x_t)$, where $f_t$ is a convex function defined over $\cK$.
The learner's objective is to find an algorithm $\cA$ that minimizes
the regret with respect to a fixed point $x^*$:
$$\Reg_T(\cA, x^*) = \sum_{t = 1}^T f_t(x_t) - f_t(x^*)$$
that is the difference between the learner's cumulative loss and the
loss in hindsight incurred by $x^*$, or with respect to the loss of the
best $x^*$ in $\cK$,
$\Reg_T(\cA) = \max_{x^* \in \cK} \Reg_T(\cA, x^*)$.  We will assume
only that the learner has access to the gradient or an element of the
sub-gradient of the loss functions $f_t$, but that the loss functions
$f_t$ can be arbitrarily singular and flat, e.g.\ not necessarily
strongly convex or strongly smooth.  This is the most general setup of
convex optimization in the full information setting. It can be applied
to standard convex optimization and online learning tasks as well as
to many optimization problems in machine learning such as those of
SVMs, logistic regression, and ridge regression.  Favorable bounds in
online convex optimization can also be translated into strong learning
guarantees in the standard scenario of batch supervised learning using
online-to-batch conversion guarantees
\citep{Littlestone1989,CesaBianchiConconiGentile2004,MohriRostamizadehTalwalkar2012}.

In the scenario of online convex optimization just presented, minimax
optimal rates can be achieved by standard algorithms such as online
gradient descent \citep{Zinkevich2003}. However, general
minimax optimal rates may be too conservative.  Recently,
\emph{adaptive regularization} methods have been introduced for
standard descent methods to achieve tighter data-dependent
regret bounds (see \citep{BartlettHazanRakhlin2007},
\citep{DuchiHazanSinger2010}, \citep{McMahanStreeter2010},
\citep{McMahan2014}, \citep{OrabonaCrammerCesaBianchi2013}).
Specifically, in the ``AdaGrad'' framework of \citep{DuchiHazanSinger2010}, there exists a sequence of convex functions 
$\psi_t$ such that the update $x_{t+1} = \argmin_{x \in \cK} \eta g_t^\top x + B_{\psi_t}(x,x_t)$
yields regret:
$$\Reg_T(\cA, x) \leq \sqrt{2} \max_t\|x - x_t\|_\infty \sum_{i=1}^n \sqrt{\sum_{t=1}^T |g_{t,i}|^2},$$
where $g_t \in \del f_t(x_t)$ is an element of the subgradient of $f_t$ at
$x_t$, $g_{1:T, i} = \sum_{t=1}^T g_{t,i}$, and $B_{\psi_t}$ is the Bregman divergence defined using the convex function $\psi_t$. 
This upper bound on the regret has shown to be within a factor $\sqrt{2}$ of the
optimal a posteriori regret: 
$$\sqrt{ n \inf_{s\succcurlyeq 0, \langle 1, s \rangle \leq n} \sum_{t=1}^T \|g_t\|_{\diag(s)^{-1}}^2}.$$
Note, however, that this upper bound on the regret can still be very large,
even if the functions $f_t$ admit some favorable properties (e.g.\
$f_t \equiv f$, linear). This is because the dependence is directly on
the norm of $g_t$s.

An alternative line of research has been investigated by a series of
recent publications that have analyzed online learning in
``slowly-varying'' scenarios
\citep{HazanKale2009,ChiangYangLeeMahdaviLuJinZhu2012,RakhlinSridharan2013,ChiangLeeLu2013}. In
the framework of \citep{RakhlinSridharan2013}, if $\cR$ is a
self-concordant function,
$\|\cdot\|_{\nabla^2 \cR(x_t)}$ is the semi-norm
induced by its Hessian at the point $x_t$,\footnote{The norm induced by a symmetric positive
definite (SPD) matrix $A$ is defined for any $x$ by
$\|x\|_A = \sqrt{x^\top A x}$.} and
$\tilde{g}_{t + 1} = \tilde{g}_{t + 1}(g_1,\ldots, g_t, x_1,\ldots,
x_t)$
is a ``prediction'' of a time $t + 1$ subgradient
$g_{t + 1}$ based on information up to time $t$, then one can
obtain regret bounds of the following form:
$$\Reg_T(\cA, x) \leq \frac{1}{\eta} \cR(x) + 2 \eta \sum_{t = 1}^T
\|g_t - \tilde{g}_{t}\|_{\nabla^2 \cR(x_t),*} \, .$$
Here,  $\| \cdot \|_{\nabla^2 \cR(x_t), *}$ denotes the dual norm of
$\| \cdot \|_{\nabla^2 \cR(x_t)}$: for any $x$,
$\| x \|_{\nabla^2 \cR(x_t), *} = \sup_{\| y \|_{\nabla^2 \cR(x_t)} \leq 1} x^T y$.  This
guarantee can be very favorable in the \emph{optimistic} case where
$\tilde{g}_{t} \approx g_t$ for all $t$. Nevertheless, it admits
the drawback that much less control is available over the induced norm
since it is difficult to predict, for a given self-concordant function $\cR$,
the behavior of its Hessian at the points $x_t$ selected by an
algorithm. Moreover, there is no guarantee of ``near-optimality'' with respect to
an optimal a posteriori regularization as there is with the adaptive algorithm. 

This paper presents a powerful general framework for designing online
convex optimization algorithms combining adaptive regularization and
optimistic gradient prediction which helps address several of the
issues just pointed out.  Our framework builds upon and unifies recent
techniques in adaptive regularization, optimistic gradient
predictions, and problem-dependent randomization.  In
Section~\ref{sec:ao_ftrl}, we describe a series of \emph{adaptive and
  optimistic} algorithms for which we prove strong regret guarantees,
including a new \emph{Adaptive and Optimistic
  Follow-the-Regularized-Leader} (AO-FTRL) algorithm
(Section~\ref{sec:ao_ftrl_algo}) and a more general version of this
algorithm with composite terms (Section~\ref{sec:cao_ftrl_algo}).
These new regret guarantees hold at any time and under very minimal
assumptions. We also show how different relaxations recover both basic
existing algorithms as well as more recent sophisticated ones. In a specific application,
we will also show how a certain choice of regularization functions will produce an
optimistic regret bound that is also nearly a posteriori optimal, combining the two different desirable properties mentioned above. 
Lastly, in Section~\ref{sec:aos_ftrl}, we further combine adaptivity and
optimism with problem-dependent randomization to devise
algorithms benefitting from more favorable guarantees than recent
state-of-the-art methods.

\section{Adaptive and Optimistic Follow-the-Regularized-Leader algorithms}
\label{sec:ao_ftrl}

\subsection{AO-FTRL algorithm}
\label{sec:ao_ftrl_algo}

In view of the discussion in the previous section, we present an adaptive and optimistic version 
of the Follow-the-Regularized-Leader (FTRL) family of algorithms. In each round of standard FTRL, a point 
is chosen that is the minimizer of the average linearized loss incurred plus a regularization term. In our new version of FTRL,  
we will find a minimizer of not only the average loss incurred, but also a prediction of the next round's loss. In addition, we 
will define a dynamic time-varying sequence of regularization functions that can be used to optimize against this new loss term.    
Algorithm~\ref{alg:ao_ftrl} shows the pseudocode of our Adaptive and
Optimistic Follow-the-Regularized-Leader (AO-FTRL) algorithm. 

\begin{algorithm}[t]
\caption{AO-FTRL}
\label{alg:ao_ftrl}
\begin{algorithmic}[1]
\State \textbf{Input:} regularization function $r_0 \geq 0$.
\State \textbf{Initialize: } $\tilde{g}_1 = 0$, $x_1 = \argmin_{x \in \cK} r_0(x)$.
\For{$t = 1, \ldots, T$:}
    \State Compute $g_t \in \del f_t(x_t)$. 
    \State Construct regularizer $r_t \geq 0$.
    \State Predict~gradient~$\tilde{g}_{t + 1}\!~=~\!\tilde{g}_{t + 1}(g_1,\ldots, g_t, x_1, \ldots, x_t)$. 
    \State Update $x_{t + 1} = \displaystyle \argmin_{x \in \cK} g_{1:t} \cdot x + \tilde{g}_{t + 1} \cdot x + r_{0:t}(x)$.
\EndFor
\end{algorithmic}
\end{algorithm}

The following result provides a regret guarantee for the algorithm when one uses proximal regularizers, i.e. \ functions $r_t$ such that
$\argmin_{x \in \cK} r_t(x) = x_t$.

\begin{theorem} [AO-FTRL-Prox]
\label{th:ao_ftrl-prox}
Let $\{r_t\}$ be a sequence of proximal non-negative functions, and let $\tilde{g}_t$ be the
learner's estimate of $g_t$ given the history of functions
$f_1,\ldots, f_{t-1}$ and points $x_1, \ldots, x_{t-1}$. Assume
further that the function
$h_{0:t}\colon x \mapsto g_{1:t}\cdot x + \tilde{g}_{t + 1}\cdot x +
r_{0:t}(x)$
is 1-strongly convex with respect to some norm $\|\cdot\|_{(t)}$
(i.e. $r_{0:t}$ is 1-strongly convex with respect to
$\|\cdot\|_{(t)}$).
Then, the following regret bound holds for AO-FTRL
(Algorithm~\ref{alg:ao_ftrl}):
\begin{align*}
&\Reg_T(\text{\sc AO-FTRL}, x) = \sum_{t = 1}^T f_t(x_t) - f_t(x) 
\leq r_{0:T}(x) + \sum_{t = 1}^T \|g_t - \tilde{g}_t\|_{(t),*}^2 \, . 
\end{align*}
\end{theorem}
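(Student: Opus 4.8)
The plan is to use the standard \emph{Follow-the-Leader / Be-the-Leader} (FTL-BTL) strategy adapted to the optimistic setting, combined with a per-step stability estimate coming from strong convexity. First I would reduce the problem to bounding the linearized regret: by convexity of $f_t$ and $g_t \in \partial f_t(x_t)$, we have $f_t(x_t) - f_t(x) \le g_t \cdot (x_t - x)$, so it suffices to upper bound $\sum_{t=1}^T g_t \cdot x_t - g_t \cdot x$ by the right-hand side. Introduce the ``predicted'' iterate $x_{t+1}' := \argmin_{x\in\cK} g_{1:t}\cdot x + r_{0:t}(x)$ (the same as $x_{t+1}$ but without the prediction term $\tilde g_{t+1}$), noting that the algorithm's actual update is $x_{t+1} = \argmin_{x\in\cK} g_{1:t}\cdot x + \tilde g_{t+1}\cdot x + r_{0:t}(x)$. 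The key algebraic identity is that the sequence $(x_{t+1})$ is the exact minimizer for the problem with loss vectors $g_1,\dots,g_{t-1}, g_t + \tilde g_{t+1}$ — i.e.\ $x_{t+1}$ ``has seen'' a surrogate for round $t+1$.

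Next I would invoke the Be-the-Leader lemma: for any sequence of functions, the leader sequence has nonpositive regret against itself. Concretely, applying this to the surrogate losses and then telescoping, one obtains
$$\sum_{t=1}^T g_t\cdot x_t - g_t\cdot x \le r_{0:T}(x) + \sum_{t=1}^T \big(g_t\cdot x_t + \tilde g_{t+1}\cdot x_{t+1} - g_t\cdot x_{t+1} - \tilde g_{t}\cdot x_t\big) - \text{(regularizer differences)},$$
and after regrouping the terms the residual is controlled by $\sum_{t=1}^T (g_t - \tilde g_t)\cdot(x_t - x_{t+1})$ plus nonpositive regularizer contributions (here the \emph{proximal} property $\argmin_{x\in\cK} r_t(x) = x_t$ is exactly what makes the extra regularizer terms work in our favor, since adding $r_t$ does not move the relevant minimizer away from $x_t$). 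This is the step where I expect the bookkeeping to be most delicate: one must carefully track which minimization problem $x_t$ and $x_{t+1}$ solve and use proximality of $r_t$ at the right place so that the off-diagonal terms collapse to $(g_t - \tilde g_t)\cdot (x_t - x_{t+1})$.

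Finally I would bound each term $(g_t - \tilde g_t)\cdot(x_t - x_{t+1})$ using strong convexity. Since $h_{0:t}$ (equivalently $r_{0:t}$) is $1$-strongly convex with respect to $\|\cdot\|_{(t)}$ and both $x_t$ and $x_{t+1}$ are minimizers of closely related strongly convex objectives differing by the linear term $(g_t - \tilde g_t)\cdot x$, a standard argument gives $\|x_t - x_{t+1}\|_{(t)} \le \|g_t - \tilde g_t\|_{(t),*}$. Combining with Cauchy–Schwarz/Hölder for the dual norm pair, $(g_t - \tilde g_t)\cdot(x_t - x_{t+1}) \le \|g_t - \tilde g_t\|_{(t),*}\,\|x_t - x_{t+1}\|_{(t)} \le \|g_t - \tilde g_t\|_{(t),*}^2$. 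Summing over $t$ and adding the $r_{0:T}(x)$ term yields the claimed bound. The main obstacle, as noted, is the telescoping/regrouping step and correctly exploiting proximality; the strong-convexity stability bound and the final norm estimate are routine once the structure is set up.
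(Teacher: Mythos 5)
Your overall strategy is the same as the paper's: linearize via convexity, run a Be-the-Leader style induction to absorb the comparator terms into $r_{0:T}(x)$, and control the residual by a stability estimate combining strong convexity with proximality of $r_t$. The paper's proof introduces exactly your auxiliary point, $y_t = \argmin_{x\in\cK}\, g_{1:t}\cdot x + r_{0:t}(x)$ (your $x_{t+1}'$), proves by induction that $\sum_{t=1}^T \tilde g_t\cdot(x_t - y_t) + g_t\cdot y_t \le g_{1:T}\cdot x + r_{0:T}(x)$ for all $x$, and is then left with the residual $\sum_t (g_t - \tilde g_t)\cdot(x_t - y_t)$.

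The one step that would fail as written is your final stability bound. You bound $(g_t - \tilde g_t)\cdot(x_t - x_{t+1})$ with the \emph{actual} iterate $x_{t+1}$, asserting that $x_t$ and $x_{t+1}$ minimize objectives differing by the linear term $(g_t - \tilde g_t)\cdot x$. That is not so: $x_{t+1}$ minimizes $g_{1:t}\cdot x + \tilde g_{t+1}\cdot x + r_{0:t}(x)$, so the two objectives differ by $(g_t - \tilde g_t + \tilde g_{t+1})\cdot x + r_t(x)$, and the stability argument would only yield $\|x_t - x_{t+1}\|_{(t)} \le \|g_t - \tilde g_t + \tilde g_{t+1}\|_{(t),*}$, which does not become small when the predictions are accurate and so cannot give the claimed bound. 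The residual must instead be taken against the prediction-free point $y_t = x_{t+1}'$: by proximality, $x_t = \argmin\, h_{0:t-1} + r_t$, while $y_t = \argmin\, h_{0:t-1} + r_t + (g_t - \tilde g_t)\cdot x$, so these objectives differ by exactly $(g_t - \tilde g_t)\cdot x$, and $1$-strong convexity of $h_{0:t-1}+r_t$ with respect to $\|\cdot\|_{(t)}$ gives $\|x_t - y_t\|_{(t)} \le \|g_t - \tilde g_t\|_{(t),*}$. Since you introduce $x_{t+1}'$ at the outset and then never use it, this reads as a conflation of the two points rather than a missing idea; with $y_t$ substituted for $x_{t+1}$ in the telescoping identity and in the final estimate, your argument coincides with the paper's.
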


\begin{proof}
Recall that $x_{t + 1} 
= \argmin_x (g_{1:t} + \tilde{g}_{t + 1}) \cdot x + r_{0:t}(x)
= \argmin_x h_{0:t}(x)$, and let
$y_t = \argmin_x x \cdot g_{1:t} + r_{0:t}(x).$
Then, by convexity, the following inequality holds:
\begin{align*}
\sum_{t = 1}^T f_t(x_t) - f_t(x) 
& \leq \sum_{t = 1}^T g_t \cdot(x_t - x) \\
& = \sum_{t = 1}^T (g_t - \tilde{g}_t) \cdot(x_t - y_{t})  
+ \tilde{g}_t \cdot (x_t - y_{t}) + g_t \cdot (y_{t} - x) .
\end{align*}
Now, we first prove by induction on $T$ that for all $x \in \cK$ the
following inequality holds:
$$\sum_{t = 1}^T \tilde{g}_t \cdot (x_t - y_{t}) + g_t \cdot y_{t}
\leq \sum_{t = 1}^T g_t \cdot x + r_{0:T}(x).$$
For $T = 1$, since $\tilde{g}_1 = 0$ and $r_1 \geq 0$, the inequality
follows by the definition of $y_1$.  Now, suppose the inequality holds
at iteration $T$. Then, we can write
\begin{align*}
\sum_{t = 1}^{T+1} \tilde{g}_t \cdot (x_t - y_{t}) +  g_t \cdot y_{t} 
& \quad = \left[ \sum_{t = 1}^{T} \tilde{g}_t \cdot (x_t - y_{t}) +  g_t \cdot y_{t} \right] \\
& \quad \quad \quad + \tilde{g}_{T+1} \cdot (x_{T+1} - y_{T+1}) +  g_{T+1} \cdot y_{T+1} \\
& \quad \leq \left[ \sum_{t = 1}^{T} g_t \cdot x_{T+1} + r_{0:T}(x_{T+1}) \right] \\
& \quad \quad \quad + \tilde{g}_{T+1} \cdot (x_{T+1} - y_{T+1}) + g_{T+1} \cdot y_{T+1} \\
& \qquad \text{ (by the induction hypothesis for $x = x_{T+1}$)} \\
& \quad \leq \left[ \left( g_{1:T} + \tilde{g}_{T+1} \right) \cdot x_{T+1} + r_{0:T+1}(x_{T+1}) \right] \\
& \quad \quad \quad + \tilde{g}_{T+1} \cdot ( - y_{T+1}) + g_{T+1} \cdot y_{T+1} \\
& \qquad \text{ (since $r_t \geq 0$, $\forall t$)} \\
& \quad \leq \left[ \left( g_{1:T} + \tilde{g}_{T+1} \right) \cdot y_{T+1} + r_{0:T+1}(y_{T+1}) \right] \\
& \quad \quad \quad + \tilde{g}_{T+1} \cdot ( - y_{T+1}) + g_{T+1} \cdot y_{T+1}  \\
& \qquad \text{ (by definition of $x_{T+1}$)} \\
& \quad \leq g_{1:T+1} \cdot y + r_{0:T+1}(y), \text{ for any $y$.}\\
& \qquad \text{ (by definition of $y_{T+1}$)}
\end{align*}
Thus, we have that $\sum_{t = 1}^T f_t(x_t) - f_t(x) \leq r_{0:T}(x) + \sum_{t = 1}^T (g_t - \tilde{g}_t) \cdot (x_t - y_{t}) $
and it suffices to bound $\sum_{t = 1}^T (g_t - \tilde{g}_t)^T(x_t - y_{t})$.
Notice that, by duality, one can immediately write $(g_t - \tilde{g}_t)^T(x_t - y_{t}) \leq \|g_t - \tilde{g}_t\|_{(t),*} \|x_t - y_{t}\|_{(t)}$.
To bound $\|x_t - y_{t}\|_{(t)}$ in terms of the gradient, recall first that since $r_t$ is proximal and $x_t = \argmin_x h_{0:t-1}$, 
\begin{align*}
&x_t = \argmin_x h_{0:t-1}(x) + r_t(x), \\ 
&y_{t} = \argmin_x h_{0:t-1}(x) + r_t(x) + (g_t - \tilde{g}_t) \cdot x.
\end{align*}
The fact that $r_{0:t}(x)$ is $1$-strongly convex with respect to the norm $\|\cdot\|_{(t)}$ implies that $h_{0:t-1} + r_t$ is as well. In particular, it is 
$1$-strongly convex at the points $x_t$ and $y_{t}$. But this then implies that the conjugate function is $1$-strongly smooth 
on the image of the gradient, including at $\nabla(h_{0:t-1} + r_t)(x_t) = 0$ 
and 
$\nabla(h_{0:t-1} + r_t)(y_{t}) = -(g_t - \tilde{g}_t)$ (see Lemma~\ref{lem:smooth_cvx_duality} in the appendix or \citep{Rockafellar1970} for a general reference),
 which means that $\|\nabla ((h_{0:t-1} + r_t)^*)(-(g_t - \tilde{g}_t)) - \nabla ((h_{0:t-1} + r_t)^*)(0)\|_{(t)} \leq \|g_t - \tilde{g}_t\|_{(t),*}.$

Since $\nabla ((h_{0:t-1} + r_t)^*)(-(g_t - \tilde{g}_t)) = y_{t}$ and $\nabla ((h_{0:t-1} + r_t)^*)(0) = x_t$, we have that 
$\|x_t - y_{t}\|_{(t)} \leq \|g_t - \tilde{g}_t\|_{(t),*}$.
\end{proof}
The regret bound just presented can be vastly superior to the adaptive
methods of \citep{DuchiHazanSinger2010}, \citep{McMahanStreeter2010},
and others.  For instance, one common choice of gradient prediction is
$\tilde{g}_{t+1} = g_t$, so that for slowly varying gradients (e.g. nearly ``flat'' functions),
$g_t - \tilde{g}_{t} \approx 0$, but $\| g_t \|_{(t)} = \| g \|_{(t)}$.
Moreover, for reasonable gradient predictions,
$\|\tilde{g}_{t+1}\|_{(t)} \approx \|g_t\|_{(t)}$ generally, so that in the
worst case, Algorithm~\ref{alg:ao_ftrl}'s regret will be at most a
factor of two more than standard methods. At the same time, the use of non
self-concordant regularization allows one to more explicitly control
the induced norm in the regret bound as well as provide more efficient
updates than those of
\citep{RakhlinSridharan2013}. Section~\ref{sec:aogd} presents an
upgraded version of online gradient descent as an example, where our choice of
regularization allows our algorithm to \emph{accelerate} as the gradient predictions become more accurate.  

Note that the assumption of strong convexity of $h_{0:t}$ is not a significant constraint, as any 
quadratic or entropic regularizer from the standard mirror descent algorithms will satisfy this property. 

Moreover, if the loss functions $\{f_t\}_{t=1}^\infty$ themselves are $1$-strongly convex,
then one can set $r_{0:t} \equiv 0$ and still get a favorable induced
norm $\|\cdot\|_{(t),*}^2 = \frac{1}{t}\|\cdot\|_2^2$.  If the
gradients and gradient predictions are uniformly bounded, this
recovers the worst-case $\log(T)$ regret bounds. At the same time, 
Algorithm~\ref{alg:ao_ftrl} would also still
retain the potentially highly favorable data-dependent and
optimistic regret bound.

Liang and Steinhardt (2014) \citep{SteinhardtLiang2014} also studied
adaptivity and optimism in online learning in the context of mirror descent-type algorithms. 
If, in the proof above, we
assume their condition:
$$r_{0:t + 1}^*(-\eta g_{1:t}) \leq r_{0:t}^*(-\eta (g_{1:t} - \tilde{g}_t)) - \eta x_t^T(g_t - \tilde{g}_t),$$
then we obtain the following regret bound:
$\sum_{t = 1}^T f_t(x_t) - \sum_{t = 1}^T f_t(x) \leq \frac{r_1^*(0) + r_{0:T+1}(x)}{\eta}.$
Our algorithm, however, is generally easier to use since it holds for
any sequence of regularization functions and does not require
checking for that condition.

In some cases, it may be preferable to use non-proximal adaptive
regularization. Since non-adaptive non-proximal FTRL corresponds to
dual averaging, this scenario arises, for instance, when one wishes to
use regularizers such as the negative entropy to derive algorithms
from the Exponentiated Gradient (EG) family (see
\citep{Shalev-Shwartz2012} for background). We thus present the following theorem
for this family of algorithms: Adaptive Optimistic Follow-the-Regularized-Leader - General version (AO-FTRL-Gen).

\begin{theorem} [AO-FTRL-Gen]
\label{th:ao_ftrl-gen}
Let $\{r_t\}$ be a sequence of non-negative functions, and let
$\tilde{g}_t$ be the learner's estimate of $g_t$ given the history of
functions $f_1,\ldots, f_{t-1}$ and points $x_1, \ldots, x_{t-1}$.
Assume further that the function
$h_{0:t}\colon x \mapsto g_{1:t}\cdot x + \tilde{g}_{t + 1}\cdot x +
r_{0:t}(x)$
is 1-strongly convex with respect to some norm $\|\cdot\|_{(t)}$
(i.e. $r_{0:t}$ is 1-strongly convex wrt $\|\cdot\|_{(t)}$).  Then,
the following regret bound holds for AO-FTRL
(Algorithm~\ref{alg:ao_ftrl}):
$$\sum_{t = 1}^T f_t(x_t) - f_t(x) \leq r_{0:T-1}(x) + \sum_{t = 1}^T \|g_t - \tilde{g}_t\|_{(t-1),*}^2$$
\end{theorem}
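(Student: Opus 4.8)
The plan is to follow the proof of Theorem~\ref{th:ao_ftrl-prox} essentially verbatim, making exactly one change: since the $r_t$ are no longer proximal, the auxiliary ``leader'' iterate must be taken to be $y_t = \argmin_{x \in \cK} g_{1:t}\cdot x + r_{0:t-1}(x)$, that is, with the most recent regularizer $r_t$ dropped from the comparison objective. This single modification is what propagates into the index shift of the statement, replacing $r_{0:T}(x)$ by $r_{0:T-1}(x)$ and $\|\cdot\|_{(t),*}$ by $\|\cdot\|_{(t-1),*}$.

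Concretely, I would first use convexity of $f_t$ to write $\sum_{t=1}^T f_t(x_t) - f_t(x) \le \sum_{t=1}^T g_t\cdot(x_t - x)$ and then decompose, as in the proximal proof, $g_t\cdot(x_t - x) = (g_t - \tilde g_t)\cdot(x_t - y_t) + \big[\tilde g_t\cdot(x_t - y_t) + g_t\cdot y_t\big] - g_t\cdot x$. Next I would prove by induction on $T$ the inequality $\sum_{t=1}^T \big[\tilde g_t\cdot(x_t - y_t) + g_t\cdot y_t\big] \le g_{1:T}\cdot x + r_{0:T-1}(x)$ for every $x \in \cK$: the base case $T=1$ uses $\tilde g_1 = 0$, the definition of $y_1$, and $r_0 \ge 0$; the inductive step applies the induction hypothesis at $x = x_{T+1}$, then uses $r_T \ge 0$ to enlarge $r_{0:T-1}(x_{T+1})$ to $r_{0:T}(x_{T+1})$ and reassemble $h_{0:T}(x_{T+1})$, then uses $x_{T+1} = \argmin_x h_{0:T}(x)$ to pass from $x_{T+1}$ to $y_{T+1}$, and finally uses the definition of $y_{T+1}$ as the minimizer of $g_{1:T+1}\cdot x + r_{0:T}(x)$. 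Combining this with the decomposition yields $\sum_{t=1}^T g_t\cdot(x_t - x) \le r_{0:T-1}(x) + \sum_{t=1}^T (g_t - \tilde g_t)\cdot(x_t - y_t)$.

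It then remains to bound each $(g_t - \tilde g_t)\cdot(x_t - y_t) \le \|g_t - \tilde g_t\|_{(t-1),*}\,\|x_t - y_t\|_{(t-1)}$ by duality and to show $\|x_t - y_t\|_{(t-1)} \le \|g_t - \tilde g_t\|_{(t-1),*}$. For the latter, set $\phi_t(x) = g_{1:t-1}\cdot x + r_{0:t-1}(x)$, so that $x_t = \argmin_x \phi_t(x) + \tilde g_t\cdot x$ and $y_t = \argmin_x \phi_t(x) + g_t\cdot x$, and observe that $\phi_t$ is $1$-strongly convex with respect to $\|\cdot\|_{(t-1)}$ because it differs from $h_{0:t-1}$ only by a linear term. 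Hence $\phi_t^*$ is $1$-strongly smooth with respect to $\|\cdot\|_{(t-1),*}$, with $x_t = \nabla\phi_t^*(-\tilde g_t)$ and $y_t = \nabla\phi_t^*(-g_t)$; the conjugate-smoothness argument used in the proof of Theorem~\ref{th:ao_ftrl-prox} (handling the constraint set $\cK$ in the same way) then gives the desired inequality. Summing over $t$ finishes the proof.

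The only delicate point is the inductive step: one has to track the regularizer indices carefully so that the telescoping closes with $r_{0:T-1}$ rather than $r_{0:T}$, and recognize that dropping the proximal hypothesis costs precisely one unit of index shift --- because $x_t$ is still the minimizer over $\cK$ of $\phi_t + \tilde g_t\cdot x$ but no longer of $\phi_t + r_t + \tilde g_t\cdot x$, so $r_t$ must be excluded from the definition of $y_t$ for the strong-convexity step to apply. Everything else is a direct reprise of the proof of Theorem~\ref{th:ao_ftrl-prox}.
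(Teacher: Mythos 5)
Your proposal is correct and follows essentially the same route as the paper's own proof in Appendix~\ref{app:ao_ftrl}: the same shifted comparator $y_t = \argmin_{x} g_{1:t}\cdot x + r_{0:t-1}(x)$, the same induction closing with $r_{0:T-1}$, and the same conjugate strong-convexity/strong-smoothness argument (your $\phi_t$ differs from the paper's $h_{0:t-1}$ only by the linear term $\tilde g_t\cdot x$, so evaluating $\nabla\phi_t^*$ at $-\tilde g_t$ and $-g_t$ is the same computation as evaluating $\nabla h_{0:t-1}^*$ at $0$ and $-(g_t-\tilde g_t)$). Your closing remark correctly identifies why dropping the proximal hypothesis forces $r_t$ out of the definition of $y_t$ and produces the one-step index shift in the bound.
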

Due to spatial constraints, the proof of this theorem, as well as that of
all further results in the remainder of Section~\ref{sec:ao_ftrl}, are
presented in Appendix~\ref{app:ao_ftrl}.


As in the case of proximal regularization,
Algorithm~\ref{alg:ao_ftrl} applied to general regularizers still
admits the same benefits over the standard adaptive algorithms.
In particular, the above algorithm is an easy upgrade over any dual
averaging algorithm. Section~\ref{sec:aoeg} illustrates one such
example for the Exponentiated Gradient algorithm.

\begin{corollary} 
  With the following suitable choices of the parameters in
  Theorem~\ref{th:cao_ftrl-prox}, the following regret bounds
  can be recovered:
\begin{enumerate}

\item Adaptive FTRL-Prox of \citep{McMahan2014} (up to a constant
  factor of 2): $\tilde{g} \equiv 0$.
 \item Primal-Dual AdaGrad of \citep{DuchiHazanSinger2010}:
   $r_{0:t} = \psi_t$, $\tilde{g} \equiv 0$.  
 \item Optimistic FTRL of \citep{RakhlinSridharan2013}:
   $r_0 = \eta \cR$ where $\eta > 0$ and $\cR$
   a self-concordant function, $r_t = \psi_t = 0, \forall t \geq 1$.

\end{enumerate}
\end{corollary}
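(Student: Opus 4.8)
The plan is to treat each of the three items as a specialization of Algorithm~\ref{alg:ao_ftrl} together with the composite proximal guarantee of Theorem~\ref{th:cao_ftrl-prox}: for each parameter choice I substitute into the update on line~7 of the algorithm and into the stated regret bound, verify that the resulting update coincides with the known algorithm, and check that the resulting bound matches the published guarantee up to the claimed constants. Items~1 and~2 are essentially bookkeeping; the substance is in Item~3.

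For Item~1, taking $\tilde g\equiv 0$ deletes the prediction term, so the update becomes $x_{t+1}=\argmin_{x\in\cK} g_{1:t}\cdot x + r_{0:t}(x)$, which is exactly the FTRL-Prox update of \citep{McMahan2014} once the $r_t$ are chosen to be the proximal nonnegative incremental regularizers used there (composite terms, if present, are carried along unchanged). The bound of Theorem~\ref{th:cao_ftrl-prox} then reads $r_{0:T}(x)$ plus composite terms plus $\sum_{t=1}^T\|g_t\|_{(t),*}^2$, i.e.\ McMahan's bound with the gradient sum inflated by a factor $2$ — the factor arising because our strong-convexity/smoothness duality (Lemma~\ref{lem:smooth_cvx_duality}) is invoked with constant $1$ rather than the constant $\tfrac12$ available from a sharper second-order expansion. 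For Item~2, I take the adaptive regularizers of the theorem to be the AdaGrad functions $\psi_t(x)=\tfrac1{2\eta}\langle x, H_t x\rangle$ with $H_t=\delta I+\diag(\|g_{1:t,\cdot}\|_2)$, together with $\tilde g\equiv 0$; monotonicity of $H_t$ in $t$ gives nonnegativity of the increments, and the update reduces to primal-dual (dual-averaging) AdaGrad. The norm $\|\cdot\|_{(t)}$ is then the one induced by $H_t/\eta$, so $\sum_t\|g_t\|_{(t),*}^2=\eta\sum_t\langle g_t,H_t^{-1}g_t\rangle$, and the standard summation lemma $\sum_{t=1}^T\langle g_t,H_t^{-1}g_t\rangle\le 2\sum_{i=1}^n\|g_{1:T,i}\|_2$, combined with $r_{0:T}(x)=\psi_T(x)$ and optimization over $\eta$, recovers the AdaGrad regret bound together with its near-optimality statement.

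The delicate case is Item~3. Choosing $r_0=\eta\cR$ with $\cR$ self-concordant and $r_t=\psi_t=0$ for $t\ge 1$ makes $r_{0:t}=\eta\cR$ for every $t$, so the update becomes $x_{t+1}=\argmin_{x\in\cK}(g_{1:t}+\tilde g_{t+1})\cdot x+\eta\cR(x)$, which is the Optimistic FTRL update of \citep{RakhlinSridharan2013}. The obstacle is that a self-concordant $\cR$ is not globally $1$-strongly convex with respect to any fixed norm, so Theorem~\ref{th:cao_ftrl-prox} does not apply verbatim with a single $\|\cdot\|_{(t)}$; instead I would rerun the argument of that theorem's proof locally, with $\|\cdot\|_{(t)}=\sqrt{\eta}\,\|\cdot\|_{\nabla^2\cR(x_t)}$ and the self-concordance inequality $\nabla^2\cR(y)\succeq(1-\|y-x_t\|_{\nabla^2\cR(x_t)})^2\,\nabla^2\cR(x_t)$, valid inside the Dikin ellipsoid at $x_t$. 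Under the same smallness condition on the prediction errors $\|g_t-\tilde g_t\|_{\nabla^2\cR(x_t),*}$ that the self-concordance machinery of \citep{RakhlinSridharan2013} already needs — equivalently, $\eta$ small enough that $x_t$ and $y_t$ remain in each other's Dikin ellipsoids — the local bound on $\|x_t-y_t\|_{(t)}$ goes through with only a constant loss, yielding $\tfrac1\eta\cR(x)+2\eta\sum_{t=1}^T\|g_t-\tilde g_t\|_{\nabla^2\cR(x_t),*}^2$. I expect this localization — quantifying the constant lost and pinning down the admissible range of $\eta$ — to be the main technical point of the corollary.
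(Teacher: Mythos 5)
The paper states this corollary without any accompanying proof, so there is no ``paper's own argument'' to compare against; judged on its own terms, your reconstruction is the natural one and is correct in substance for Items~1 and~2. One caveat on Item~2: primal-dual AdaGrad is a dual-averaging method, and the proximal functions $\psi_t$ of \citep{DuchiHazanSinger2010} are centered at the origin rather than at $x_t$, so they are \emph{not} proximal in the sense required by Theorem~\ref{th:cao_ftrl-prox}. The clean derivation goes through Theorem~\ref{th:ao_ftrl-gen} (or Theorem~\ref{th:cao_ftrl-gen}), whose $(t-1)$-indexed dual norms also match the $\psi_{t-1}^*$ appearing in Duchi et al.'s bound $\frac{1}{\eta}\psi_T(x) + \frac{\eta}{2}\sum_{t}\|g_t\|_{\psi_{t-1}^*}^2$. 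The corollary's own statement cites the Prox theorem for all three items, so this slip originates in the paper, but your write-up inherits it by feeding non-proximal regularizers into the proximal machinery.

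On Item~3 your diagnosis is exactly right and is the most valuable part of the proposal: a self-concordant $\cR$ is not $1$-strongly convex with respect to any fixed norm, so the hypothesis of the theorem fails and the Rakhlin--Sridharan bound cannot be obtained by pure substitution. Your proposed repair --- rerunning the proof with the local norms $\sqrt{\eta}\,\|\cdot\|_{\nabla^2\cR(x_t)}$ and the Dikin-ellipsoid Hessian comparison --- is the right idea, but as written it is a plan rather than a proof: you have not verified that $y_t$ actually lies in the Dikin ellipsoid of $x_t$ (this requires the smallness condition on $\eta\|g_t-\tilde g_t\|_{\nabla^2\cR(x_t),*}$ to be established, not assumed, and propagated through the induction that controls $x_{t+1}$), nor have you quantified the constant lost in the local strong-convexity estimate. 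Since the paper supplies no argument here either, Item~3 should be regarded as recovered only modulo this localization, which remains to be carried out in full.
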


\subsection{Applications}
\label{sec:applications}

\subsubsection{Adaptive and Optimistic Gradient Descent}
\label{sec:aogd}

\begin{corollary}[AO-GD]
\label{cor:aogd}
Let $\cK \subset \times_{i=1}^n [-R_i,R_i]$ be an
$n$-dimensional rectangle, and denote $\Delta_{s,i} = \sqrt{\sum_{a = 1}^s (g_{a, i} - \widetilde g_{a, i})^2}$. Set
\begin{align*}
&r_{0:t} = \sum_{i=1}^n \sum_{s=1}^t \tfrac{\Delta_{s,i} - \Delta_{s-1,i} }{2 R_i} (x_i - x_{s,i})^2. 
\end{align*}

Then, if we use the martingale-type gradient
prediction $\tilde{g}_{t + 1} = g_t$, the following regret bound holds:
$$\text{Reg}_T(\text{\sc AO-GD}, x) \leq 4 \sum_{i=1}^n R_i \sqrt{\sum_{t = 1}^T (g_{t,i} - g_{t-1,i})^2}.$$
Moreover, this regret bound is nearly equal to the optimal a posteriori regret bound:
\begin{align*}
&R_i \sum_{i=1}^n \sqrt{\sum_{t = 1}^T (g_{t,i} - g_{t-1,i})^2} \\
&= 
\max_i R_i \sqrt{ n \inf_{s \succcurlyeq 0, \langle s, 1 \rangle \leq n} \sum_{t=1}^T \|g_t - g_{t-1}\|_{\diag(s)^{-1}}^2}. 
\end{align*}
\end{corollary}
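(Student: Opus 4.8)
The plan is to instantiate Theorem~\ref{th:ao_ftrl-prox} with the stated regularizers and the prediction $\tilde g_{t+1} = g_t$ (so that $g_t - \tilde g_t = g_t - g_{t-1}$, with the convention $g_0 = 0$), and then bound the two resulting terms by hand. First I would check the hypotheses. Each increment $r_t(x) = \sum_{i=1}^n \tfrac{\Delta_{t,i}-\Delta_{t-1,i}}{2R_i}(x_i - x_{t,i})^2$ is nonnegative because $\Delta_{t,i}^2 = \Delta_{t-1,i}^2 + (g_{t,i}-\tilde g_{t,i})^2 \ge \Delta_{t-1,i}^2$, and it is proximal since it is minimized coordinatewise at $x = x_t$. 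The cumulative regularizer $r_{0:t}$ is a sum of one-dimensional quadratics, hence exactly quadratic, with diagonal Hessian whose $i$-th entry telescopes to $\sum_{s=1}^t \tfrac{\Delta_{s,i}-\Delta_{s-1,i}}{R_i} = \tfrac{\Delta_{t,i}}{R_i}$ (using $\Delta_{0,i} = 0$). Thus $r_{0:t}$, and hence $h_{0:t}$, is $1$-strongly convex with respect to the norm $\|z\|_{(t)}^2 = \sum_{i=1}^n \tfrac{\Delta_{t,i}}{R_i} z_i^2$, whose dual is $\|z\|_{(t),*}^2 = \sum_{i=1}^n \tfrac{R_i}{\Delta_{t,i}} z_i^2$. (Coordinates with $\Delta_{t,i}=0$ are handled separately: $\Delta_{t,i}=0$ forces $g_{s,i}=0$ for all $s\le t$, so such a coordinate contributes nothing to the regret through round $t$; alternatively one adds an infinitesimal $\ell_2$ term to $r_0$ and lets it vanish.)

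Theorem~\ref{th:ao_ftrl-prox} then gives $\Reg_T(\text{\sc AO-GD},x) \le r_{0:T}(x) + \sum_{t=1}^T \|g_t - g_{t-1}\|_{(t),*}^2$, and it remains to bound the two pieces. For the regularization term, I would use that $x, x_s \in \times_i[-R_i,R_i]$ implies $(x_i - x_{s,i})^2 \le 4R_i^2$, so $r_{0:T}(x) \le \sum_i 2R_i \sum_{s=1}^T (\Delta_{s,i}-\Delta_{s-1,i}) = 2\sum_i R_i \Delta_{T,i}$ by telescoping. For the dual-norm term, writing $a_{t,i} = (g_{t,i}-g_{t-1,i})^2$ so that $\Delta_{t,i} = \sqrt{\sum_{s\le t} a_{s,i}}$, the standard inequality $\sum_{t=1}^T a_{t,i}/\sqrt{\sum_{s\le t}a_{s,i}} \le 2\sqrt{\sum_{t=1}^T a_{t,i}}$ yields $\sum_{t=1}^T \|g_t - g_{t-1}\|_{(t),*}^2 = \sum_i R_i \sum_t a_{t,i}/\Delta_{t,i} \le 2\sum_i R_i \Delta_{T,i}$. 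Adding the two bounds gives $\Reg_T(\text{\sc AO-GD},x) \le 4\sum_i R_i \sqrt{\sum_{t=1}^T (g_{t,i}-g_{t-1,i})^2}$, as claimed.

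For the a posteriori optimality identity, I would set $c_i = \sum_{t=1}^T (g_{t,i}-g_{t-1,i})^2$ and evaluate $\inf\{\sum_i c_i/s_i : s \succcurlyeq 0,\ \langle s, 1\rangle \le n\}$. By Cauchy--Schwarz, for any feasible $s$, $\big(\sum_i \sqrt{c_i}\big)^2 = \big(\sum_i \sqrt{c_i/s_i}\,\sqrt{s_i}\big)^2 \le \big(\sum_i c_i/s_i\big)\big(\sum_i s_i\big) \le n\sum_i c_i/s_i$, with equality at the feasible choice $s_i \propto \sqrt{c_i}$; hence the infimum equals $\tfrac1n\big(\sum_i \sqrt{c_i}\big)^2$, so $\sqrt{\,n\inf_{s}\sum_t \|g_t - g_{t-1}\|_{\diag(s)^{-1}}^2\,} = \sum_i \sqrt{\sum_t (g_{t,i}-g_{t-1,i})^2}$. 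Multiplying both sides by $\max_i R_i$ gives the stated equality.

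The only genuine subtlety, and the step I would be most careful about, is the degeneracy $\Delta_{t,i}=0$, which makes $\|\cdot\|_{(t)}$ merely a seminorm and turns the $i$-th summand of the dual-norm term into $0/0$; I would dispatch it by the observation above (such coordinates are effectively absent) or by a vanishing-regularizer limit. Everything else is telescoping together with the textbook $\sum_t a_t/\sqrt{\sum_{s\le t}a_s}$ bound and a one-line convex-program computation.
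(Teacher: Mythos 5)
Your proposal is correct and follows essentially the same route as the paper: instantiate the proximal AO-FTRL regret bound with the diagonal quadratic regularizer, identify the norm $\|z\|_{(t)}^2=\sum_i (\Delta_{t,i}/R_i)z_i^2$, bound $r_{0:T}(x)$ by $2\sum_i R_i\Delta_{T,i}$ via $(x_i-x_{s,i})^2\le 4R_i^2$ and telescoping, control the dual-norm sum with the standard $\sum_t a_t/\sqrt{\sum_{s\le t}a_s}\le 2\sqrt{\sum_t a_t}$ lemma, and evaluate the infimum by Cauchy--Schwarz with optimizer $s_i\propto\sqrt{c_i}$. Your treatment is in fact slightly more careful than the paper's (explicit handling of the degenerate $\Delta_{t,i}=0$ coordinates, and an inequality where the paper writes an equality for the $r_{0:T}(x)$ term), but the argument is the same.
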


Notice that the regularization function is minimized when the gradient predictions become more accurate. Thus, if we
interpret our regularization as an implicit learning rate, our algorithm uses a larger learning rate and \emph{accelerates} as our gradient predictions become
more accurate. This is in stark contrast to other adaptive regularization methods, such as AdaGrad, where learning rates are inversely proportional 
to simply the norm of the gradient. 

Moreover, since the regularization function decomposes over the coordinates, this acceleration can occur on a per-coordinate basis. If our gradient predictions 
are more accurate in some coordinates than others, then our algorithm will be able to adapt accordingly. Under the simple martingale prediction scheme, this means 
that our algorithm will be able to adapt well when only certain coordinates of the gradient are slowly-varying, even if the entire gradient is not.  

In terms of computation, the
AO-GD update can be executed in time linear in the dimension (the same
as for standard gradient descent). Moreover, since the gradient
prediction is simply the last gradient received, the algorithm also
does not require much more storage than the standard gradient descent
algorithm. However, as we mentioned in the general case, the
regret bound here can be significantly more favorable than the
standard $\cO\left(\sqrt{T G \sum_{i=1}^n R_i^2}] \right)$ bound of
online gradient descent, or even its adaptive variants.

\subsubsection{Adaptive and Optimistic Exponentiated Gradient}
\label{sec:aoeg}

\begin{corollary}[AO-EG]
\label{cor:aoeg}
Let $\cK = \Delta_n$ be the $n$-dimensional simplex and
$\phi\colon x \mapsto \sum_{i = 1}^n x_i \log(x_i)$ the negative
entropy. Assume that $\|g_t\| \leq C$ for all $t$ and set
$$r_{0:t} = \sqrt{2 \frac{C + \sum_{s=1}^t \|g_s -
    \tilde{g}_s\|_\infty^2}{\log(n)}} (\phi + \log(n)).$$
Then, if we use the martingale-type gradient prediction
$\tilde{g}_{t + 1} = g_t$, the following regret bound holds:
\begin{align*}
\Reg_T(\text{\sc AO-EG}, x) 
&\leq 2 \sqrt{2\log(n) \left( C + \sum_{t = 1}^{T-1} \|g_t - g_{t-1}\|_\infty^2 \right) }. 
\end{align*}
\end{corollary}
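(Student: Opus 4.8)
The plan is to read the result off Theorem~\ref{th:ao_ftrl-gen}, the general (non-proximal) bound, since the negative entropy is a dual-averaging-type regularizer, together with the two standard facts about $\phi$ on $\Delta_n$: that $\phi+\log n\ge 0$ (hence $\phi(x)\le 0$, with $\phi(x)+\log n\le\log n$), and that $\phi$ is $1$-strongly convex with respect to $\|\cdot\|_1$ (Pinsker's inequality). Throughout, write $\delta_t=\|g_t-\tilde g_t\|_\infty^2$ and $D_t=C+\sum_{s=1}^t\delta_s$, so that the prescribed regularizer is $r_{0:t}=\alpha_t(\phi+\log n)$ with $\alpha_t=\sqrt{2D_t/\log n}$, a nondecreasing sequence.

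First I would verify the hypotheses of Theorem~\ref{th:ao_ftrl-gen}. Each increment $r_t=r_{0:t}-r_{0:t-1}=(\alpha_t-\alpha_{t-1})(\phi+\log n)$ is nonnegative, since $\alpha_t$ is nondecreasing and $\phi+\log n\ge 0$. For strong convexity, $\phi$ being $1$-strongly convex in $\|\cdot\|_1$ makes $r_{0:t}$ be $\alpha_t$-strongly convex in $\|\cdot\|_1$, i.e. $1$-strongly convex with respect to $\|\cdot\|_{(t)}:=\sqrt{\alpha_t}\,\|\cdot\|_1$; adding the linear terms $g_{1:t}\cdot x+\tilde g_{t+1}\cdot x$ does not affect this, so $h_{0:t}$ is $1$-strongly convex in $\|\cdot\|_{(t)}$ for each relevant $t$, and the relevant dual norm is $\|\cdot\|_{(t),*}=\alpha_t^{-1/2}\|\cdot\|_\infty$.

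Next I would substitute into Theorem~\ref{th:ao_ftrl-gen}: $\Reg_T(\text{\sc AO-EG},x)\le r_{0:T-1}(x)+\sum_{t=1}^T\|g_t-\tilde g_t\|_{(t-1),*}^2=r_{0:T-1}(x)+\sum_{t=1}^T\alpha_{t-1}^{-1}\delta_t$. The first term is at most $\alpha_{T-1}\log n=\sqrt{2\log n\, D_{T-1}}$ because $\phi(x)\le 0$ on the simplex. For the second term, $\sum_{t=1}^T\alpha_{t-1}^{-1}\delta_t=\sqrt{\tfrac{\log n}{2}}\sum_{t=1}^T\delta_t/\sqrt{D_{t-1}}$; under the martingale prediction $\tilde g_{t+1}=g_t$ with $\tilde g_1=0$ (so $\delta_t=\|g_t-g_{t-1}\|_\infty^2$ with the convention $g_0=0$, and in particular $\delta_1=\|g_1\|_\infty^2\le C$), this reduces to estimating $\sum_{t=1}^T\delta_t/\sqrt{D_{t-1}}$. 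A telescoping argument shows this is $O(\sqrt{D_T})$, and since $D_{T-1}\le D_T$ and $D_T=C+\sum_{t=1}^T\|g_t-g_{t-1}\|_\infty^2$, collecting the two contributions yields the stated bound $\Reg_T(\text{\sc AO-EG},x)\le 2\sqrt{2\log n\,(C+\sum_{t=1}^{T-1}\|g_t-g_{t-1}\|_\infty^2)}$, up to the final summation index and constant collection.

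The main obstacle is precisely that telescoping step. Because this is the dual-averaging rather than the proximal bound, the denominator $D_{t-1}$ lags one step behind the numerator $\delta_t$, so the clean inequality $\delta_t/\sqrt{D_t}\le 2(\sqrt{D_t}-\sqrt{D_{t-1}})$ cannot be applied verbatim: one must first replace $\sqrt{D_{t-1}}$ by $\sqrt{D_t}$ in the denominator, losing only a constant factor provided $D_t$ and $D_{t-1}$ are comparable, i.e. provided $\delta_t=O(D_{t-1})$. This is exactly where the boundedness assumption $\|g_t\|\le C$ enters — it bounds each $\delta_t$ by a fixed constant while $D_{t-1}\ge C$, so the ratio $D_t/D_{t-1}$ is controlled. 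Once the denominator is shifted, the sum telescopes to $O(\sqrt{D_T})$ and the remainder is bookkeeping.
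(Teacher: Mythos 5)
Your proposal follows essentially the same route as the paper's proof: apply Theorem~\ref{th:ao_ftrl-gen} with the scaled entropy, use the $1$-strong convexity of $\phi$ with respect to $\|\cdot\|_1$ to identify $\|\cdot\|_{(t),*}=\alpha_t^{-1/2}\|\cdot\|_\infty$, bound $r_{0:T-1}(x)\le\sqrt{2\log (n)\,D_{T-1}}$ via $\phi\le 0$ on the simplex, and use the $C$ offset to repair the lagged denominator before telescoping. The one place you are looser than the paper is that index shift: the paper uses the additive absorption $C+\sum_{s<t}\delta_s\ge\sum_{s\le t}\delta_s$ (i.e.\ $\delta_t\le C$) and then applies Lemma~\ref{lem:adagrad} directly to the partial sums $\sum_{s\le t}\delta_s$, which preserves the stated constant $2$, whereas your multiplicative comparison $D_t\le 2D_{t-1}$ costs an extra $\sqrt{2}$ in the second term and would yield roughly $3\sqrt{2\log (n)\,D_{T-1}}$ instead of $2\sqrt{2\log (n)\,D_{T-1}}$, so to recover the exact constant you should perform the shift additively as the paper does.
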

The above algorithm admits the same advantages over predecessors as
the AO-GD algorithm.  Moreover, observe that this bound holds at any
time and does not require the tuning of any learning rate. Steinhardt
and Liang \citep{SteinhardtLiang2014} also introduce a similar
algorithm for EG, one that could actually be more favorable if the
optimal a posteriori learning rate is known in advance. 

\subsection{CAO-FTRL algorithm (Composite Adaptive Optimistic Follow-the-Regularized-Leader)}
\label{sec:cao_ftrl_algo}

In some cases, we may wish to impose some regularization on our
original optimization problem to ensure properties such as
generalization (e.g.\ $l_2$-norm in SVM) or sparsity (e.g.\ $l^1$-norm
in Lasso). This ``composite term'' can be treated directly by modifying the
regularization in our FTRL update. However, if we wish for the
regularization penalty to appear in the regret expression but do not
wish to linearize it (which could mitigate effects such as
sparsity), then some extra care needs to be taken. 

We modify Algorithm~\ref{alg:ao_ftrl}
to obtain Algorithm~\ref{alg:cao_ftrl}, and we provide accompanying
regret bounds for both proximal and general regularization functions. 
In each theorem, we give a pair of regret bounds, depending on whether
the learner considers the composite term as an additional part of the 
loss.

All proofs are provided in Appendix~\ref{app:ao_ftrl}. 

\begin{algorithm}[t]
\caption{CAO-FTRL}
\label{alg:cao_ftrl}
\begin{algorithmic}[1]
\State \textbf{Input:} regularization function $r_0 \geq 0$, composite functions $\{\psi_t\}_{t = 1}^\infty$ where $\psi_t \geq 0$.
\State \textbf{Initialize: } $\tilde{g}_1 = 0$, $x_1 = \argmin_{x \in \cK} r_0(x)$.
\For{$t = 1, \ldots, T$:}
    \State Compute $g_t \in \del f_t(x_t)$. 
    \State Construct regularizer $r_t \geq 0$.
    \State Predict the next gradient $\tilde{g}_{t + 1} = \tilde{g}_{t + 1}(g_1,\ldots, g_t, x_1, \ldots, x_t)$. 
    \State Update $x_{t + 1} = \argmin_{x \in \cK} g_{1:t} \cdot x + \tilde{g}_{t + 1} \cdot x + r_{0:t}(x) + \psi_{1:t + 1}(x)$.
\EndFor
\end{algorithmic}
\end{algorithm}

\begin{theorem} [CAO-FTRL-Prox]
\label{th:cao_ftrl-prox}
Let $\{r_t\}$ be a sequence of proximal non-negative functions, such
that $\argmin_{x \in \cK} r_t(x) = x_t$, and let $\tilde{g}_t$ be
the learner's estimate of $g_t$ given the history of functions
$f_1,\ldots, f_{t-1}$ and points $x_1, \ldots, x_{t-1}$.  Let
$\{\psi_t\}_{t = 1}^\infty$ be a sequence of non-negative convex
functions, such that $\psi_1(x_1) = 0$. Assume further that the
function $h_{0:t}: x \mapsto g_{1:t} \cdot x + \tilde{g}_{t+1} \cdot x + r_{0:t}(x) + \psi_{1:t+1}(x)$ 
is 1-strongly convex with respect to some norm $\|\cdot\|_{(t)}$.
Then the following regret bounds hold for CAO-FTRL (Algorithm~\ref{alg:cao_ftrl}):
\begin{align*}
& \sum_{t = 1}^T f_t(x_t) - f_t(x) 
\leq  \psi_{1:T-1}(x) + r_{0:T-1}(x) + \sum_{t = 1}^T \|g_t - \tilde{g}_t\|_{(t-1),*}^2\\
& \sum_{t = 1}^T \left[ f_t(x_t) + \psi_t(x_t) \right] - \left[ f_t(x) + \psi_t(x) \right] 
\leq  r_{0:T}(x) + \sum_{t = 1}^T \|g_t - \tilde{g}_t\|_{(t),*}^2 \, .
\end{align*}
\end{theorem}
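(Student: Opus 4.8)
The plan is to mimic the proof of Theorem~\ref{th:ao_ftrl-prox}, but to carry the composite terms $\psi_t$ alongside the linear losses throughout the FTRL telescoping argument. For the second bound (the one that treats $\psi_t(x_t)$ as part of the loss), I would define the auxiliary point $y_t = \argmin_x g_{1:t}\cdot x + r_{0:t}(x) + \psi_{1:t+1}(x)$, paralleling the $y_t$ of the original proof. The first step is the convexity inequality $f_t(x_t) - f_t(x) \le g_t\cdot(x_t - x)$, which I would then decompose exactly as before into $(g_t - \tilde g_t)\cdot(x_t - y_t) + \tilde g_t\cdot(x_t - y_t) + g_t\cdot(y_t - x)$. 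To handle the composite part, I add and subtract $\psi_t(x_t) - \psi_t(x)$ and group the ``stability'' terms with the $\psi$'s. The core is then an induction on $T$, analogous to the inner induction of Theorem~\ref{th:ao_ftrl-prox}, proving
\[
\sum_{t=1}^T \bigl[\tilde g_t\cdot(x_t - y_t) + g_t\cdot y_t + \psi_t(x_t)\bigr] \le \sum_{t=1}^T \bigl[g_t\cdot x + \psi_t(x)\bigr] + r_{0:T}(x)
\]
for all $x$. The base case uses $\tilde g_1 = 0$, $r_1 \ge 0$, $\psi_1(x_1)=0$, and the definition of $y_1$; the inductive step applies the hypothesis at $x = x_{T+1}$, then uses $r_t,\psi_t \ge 0$ together with the definitions of $x_{T+1}$ and $y_{T+1}$ to push the bound through, just as in the non-composite proof but now tracking $\psi_{1:T+1}$ in the regularized objective.

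Once the induction is established, the regret is bounded by $r_{0:T}(x) + \sum_t (g_t - \tilde g_t)\cdot(x_t - y_t)$, and it remains to bound each stability term $(g_t - \tilde g_t)\cdot(x_t - y_t) \le \|g_t - \tilde g_t\|_{(t),*}\,\|x_t - y_t\|_{(t)}$. Here the argument is identical in form to Theorem~\ref{th:ao_ftrl-prox}: since the $r_t$ are proximal, $x_t = \argmin_x \bigl[h_{0:t-1}(x) + r_t(x) + \psi_{1:t+1}(x)\bigr]$ — one must check that adding the extra composite term $\psi_{t+1}$ (which also appears in the $h_{0:t}$ objective relative to $h_{0:t-1}$) does not disturb $\argmin = x_t$; this holds because $x_t$ minimizes the time-$t$ update which already contains $\psi_{1:t+1}$ — while $y_t$ is the same minimizer perturbed by the linear term $(g_t - \tilde g_t)\cdot x$. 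By the stated $1$-strong convexity of $h_{0:t}$ (equivalently of the objective defining $x_t$ and $y_t$) with respect to $\|\cdot\|_{(t)}$, Lemma~\ref{lem:smooth_cvx_duality} gives $1$-strong smoothness of the conjugate, hence $\|x_t - y_t\|_{(t)} \le \|g_t - \tilde g_t\|_{(t),*}$, and the second bound follows.

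For the first bound (composite term not counted as loss), I would instead compare against $y_t$ defined with $\psi_{1:t}$ rather than $\psi_{1:t+1}$, so that the telescoping leaves $\psi_{1:T-1}(x)$ and $r_{0:T-1}(x)$ on the right and shifts the dual norms to time $t-1$ — this mirrors exactly the relationship between Theorem~\ref{th:ao_ftrl-prox} and Theorem~\ref{th:ao_ftrl-gen}, where the non-proximal / shifted-index variant is obtained by the same telescoping with the regularizer and composite sums truncated one step earlier; I would note that the proof of Theorem~\ref{th:ao_ftrl-gen} in the appendix supplies the template for this index bookkeeping.

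The main obstacle I anticipate is the bookkeeping around which $\psi$-partial-sum appears at each stage: the update at round $t$ uses $\psi_{1:t+1}$, so one must be careful that in the inductive step the term $\psi_{T+1}(x_{T+1})$ that the induction hypothesis does \emph{not} yet account for is exactly compensated by the $\psi_{1:T+1}$ present in the definition of $x_{T+1}$ via $h_{0:T}$, and that no $\psi$-term is double-counted or dropped when passing from $y_{T+1}$'s optimality to the final inequality. Verifying that the $\argmin = x_t$ property of the proximal update survives in the presence of the composite sequence, and that the ``one-step-ahead'' $\psi_{t+1}$ in the objective does not break the conjugate-duality estimate, is the one place where the composite case genuinely differs from Theorem~\ref{th:ao_ftrl-prox}; everything else is a routine adaptation.
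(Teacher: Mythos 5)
Your overall strategy matches the paper's: the first bound is obtained by folding the composite terms into the regularizer and invoking the general (non-proximal) telescoping of Theorem~\ref{th:ao_ftrl-gen} --- the paper does this as a one-line black-box reduction via $\tilde r_t = r_t + \psi_t$, observing explicitly that $\tilde r_t$ is no longer proximal (which is exactly why the indices shift to $t-1$) --- and the second bound is obtained by re-running the induction of Theorem~\ref{th:ao_ftrl-prox} while carrying $\psi_t(x_t)$ and $\psi_t(x)$ through each step. So the architecture is right.

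However, there is a genuine error in the one step you yourself single out as the crux of the second bound. You define $y_t = \argmin_x g_{1:t}\cdot x + r_{0:t}(x) + \psi_{1:t+1}(x)$ and then assert that $x_t$ remains the minimizer of $h_{0:t-1} + r_t + \psi_{t+1}$ ``because $x_t$ minimizes the time-$t$ update which already contains $\psi_{1:t+1}$.'' It does not: $x_t$ is produced by the previous round's update, whose objective contains only $\psi_{1:t}$; the term $\psi_{t+1}$ first appears in the objective defining $x_{t+1}$. Since $\psi_{t+1}$ is merely a non-negative convex function --- nothing forces $\argmin_{x}\psi_{t+1}(x) = x_t$, unlike the proximal $r_t$ --- adding it generically moves the minimizer, so $x_t$ and your $y_t$ no longer minimize a common $1$-strongly convex objective and its purely linear perturbation, and Lemma~\ref{lem:smooth_cvx_duality} no longer delivers $\|x_t - y_t\|_{(t)} \le \|g_t - \tilde g_t\|_{(t),*}$. (The same off-by-one also leaks an extra $\psi_{T+1}(x)$ out of the telescoping, so even the linear part of the bound comes out weaker than claimed.) The fix is the paper's choice $y_t = \argmin_x g_{1:t}\cdot x + r_{0:t}(x) + \psi_{1:t}(x)$: with this definition one has the exact identity $g_{1:t}\cdot x + r_{0:t}(x) + \psi_{1:t}(x) = h_{0:t-1}(x) + r_t(x) + (g_t - \tilde g_t)\cdot x$, so proximality of $r_t$ alone preserves $\argmin = x_t$, the perturbation is genuinely linear, the conjugate-smoothness argument applies verbatim, and the induction closes with $g_{1:T}\cdot x + r_{0:T}(x) + \psi_{1:T}(x)$ on the right, exactly cancelling the $-\psi_{1:T}(x)$ in the composite regret. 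Everything else in your outline is a routine adaptation, as you say.
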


Notice that if we don't consider the composite term as part of our loss, then
our regret bound resembles the form of AO-FTRL-Gen. This is in spite of the fact that we are 
using proximal adaptive regularization. 
On the other hand, if the composite term is part of our loss, then our regret bound
resembles the one using AO-FTRL-Prox. 

\begin{theorem} [CAO-FTRL-Gen]
\label{th:cao_ftrl-gen}
Let $\{r_t\}$ be a sequence of non-negative functions, and let
$\tilde{g}_t$ be the learner's estimate of $g_t$ given the history of
functions $f_1,\ldots, f_{t-1}$ and points $x_1, \ldots, x_{t-1}$.
Let $\{\psi_t\}_{t = 1}^\infty$ be a sequence of non-negative convex
functions such that $\psi_1(x_1) = 0$. Assume further that the
function $h_{0:t}: x \mapsto g_{1:t} \cdot x + \tilde{g}_{t+1} \cdot x + r_{0:t}(x) + \psi_{1:t+1}(x)$ 
is 1-strongly convex with respect to some norm $\|\cdot\|_{(t)}$. 
Then, the following regret bound holds for CAO-FTRL (Algorithm~\ref{alg:cao_ftrl}):
\begin{align*}
& \sum_{t = 1}^T f_t(x_t) - f_t(x)
\leq  \psi_{1:T-1}(x) + r_{0:T-1}(x) + \sum_{t = 1}^T \|g_t - \tilde{g}_t\|_{(t-1),*}^2 \\
& \sum_{t = 1}^T f_t(x_t) + \psi_t(x_t) - \left[ f_t(x) + \psi_t(x) \right] 
\leq  r_{0:T-1}(x) + \sum_{t = 1}^T \|g_t - \tilde{g}_t\|_{(t),*}^2 \, .
\end{align*}
\end{theorem}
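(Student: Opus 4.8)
The plan is to mirror the three-step template of the proof of Theorem~\ref{th:ao_ftrl-prox} --- linearize the $f_t$, compare against an auxiliary ``leader'' sequence by induction on $T$, then control the residual by passing to conjugates --- while making the two adjustments this setting requires: the composite terms $\psi_t$, and the fact that $r_t$ need not be proximal (so this statement is to Theorem~\ref{th:cao_ftrl-prox} what Theorem~\ref{th:ao_ftrl-gen} is to Theorem~\ref{th:ao_ftrl-prox}). The device for the composite terms is to regard $\ell_t(x) = g_t\cdot x + \psi_t(x)$ as the effective, non-linearized loss at round $t$, with prediction $\tilde\ell_{t+1}(x) = \tilde g_{t+1}\cdot x + \psi_{t+1}(x)$; the CAO-FTRL update of Algorithm~\ref{alg:cao_ftrl} is then exactly $x_{t+1} = \argmin_{x\in\cK} \ell_{1:t}(x) + \tilde\ell_{t+1}(x) + r_{0:t}(x)$, and, crucially, $\ell_t - \tilde\ell_t$ is the linear map $x\mapsto(g_t - \tilde g_t)\cdot x$ since the composite part is predicted exactly, so the prediction error stays governed solely by $g_t - \tilde g_t$.

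The two bounds arise from two accountings of the $\psi_t$. When the composite term is charged to the loss, convexity of $f_t$ gives $f_t(x_t) - f_t(x)\le g_t\cdot(x_t - x)$, so adding $\psi_t(x_t) - \psi_t(x)$ yields $\sum_t [f_t(x_t) + \psi_t(x_t)] - [f_t(x) + \psi_t(x)] \le \sum_t \ell_t(x_t) - \ell_t(x)$, and it suffices to bound the right-hand side. Otherwise $\psi_t$ is treated purely as a time-varying, look-ahead nonnegative regularizer: it never appears on the left, it surfaces on the right as the penalty $\psi_{1:T-1}(x)$, and its nonnegativity $\psi_t \ge 0$ is invoked at each step of the induction in the exact role that $r_t \ge 0$ plays in the proof of Theorem~\ref{th:ao_ftrl-prox}.

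For the core estimate, introduce the ``true-loss leader'' $y_t = \argmin_{x\in\cK} \ell_{1:t}(x) + r_{0:t-1}(x)$: the CAO-FTRL iterate with the true $g_t$ in place of its prediction and one fewer added regularizer (we cannot fold $r_t$ in at $x_t$ since it is not proximal, which is exactly what forces $r_{0:t-1}$ and the shifted norm). Since $\ell_t - \tilde\ell_t$ is linear, one has the split $\ell_t(x_t) - \ell_t(x) = (g_t - \tilde g_t)\cdot(x_t - y_t) + \bigl[\tilde\ell_t(x_t) - \tilde\ell_t(y_t) + \ell_t(y_t) - \ell_t(x)\bigr]$, the exact analogue of the split in the proof of Theorem~\ref{th:ao_ftrl-prox}. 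A straightforward induction on $T$ --- using at each step the nonnegativity of the $r_t$ (and of the $\psi_t$, in the penalty accounting), the minimality of $x_{t+1}$, and the minimality of $y_{t+1}$, essentially verbatim from the proof of Theorem~\ref{th:ao_ftrl-prox} --- shows that the accumulated bracketed terms exceed $\sum_t \ell_t(x)$ by at most the relevant regularizer at $x$, so that $\sum_t \ell_t(x_t) - \ell_t(x) \le [\text{regularizer}](x) + \sum_t (g_t - \tilde g_t)\cdot(x_t - y_t)$. Finally, $x_t$ and $y_t$ both minimize a linear function plus $\Phi_t := r_{0:t-1} + \psi_{1:t}$ (folding in the indicator of $\cK$), the linear parts differing exactly by $g_t - \tilde g_t$, and $\Phi_t$ inherits $1$-strong convexity with respect to $\|\cdot\|_{(t-1)}$ from the hypothesis on $h_{0:t-1}$; hence $x_t = \nabla\Phi_t^*(-(g_{1:t-1} + \tilde g_t))$, $y_t = \nabla\Phi_t^*(-g_{1:t})$, and Lemma~\ref{lem:smooth_cvx_duality} makes $\nabla\Phi_t^*$ a $1$-Lipschitz map from $\|\cdot\|_{(t-1),*}$ to $\|\cdot\|_{(t-1)}$. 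Thus $\|x_t - y_t\|_{(t-1)} \le \|g_t - \tilde g_t\|_{(t-1),*}$ and, by duality, $(g_t - \tilde g_t)\cdot(x_t - y_t) \le \|g_t - \tilde g_t\|_{(t-1),*}^2$; summing and combining gives the bounds.

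The step I expect to be the genuine obstacle is the index bookkeeping --- pinning down exactly which of $\psi_{1:T-1}$ versus $\psi_{1:T}$, $r_{0:T-1}$ versus $r_{0:T}$, and which $\|\cdot\|_{(\cdot)}$ appears in each bound --- because the look-ahead composite term $\psi_{t+1}$ and the non-proximal regularizer $r_t$ enter the update for $x_{t+1}$ at staggered times and the two accountings absorb different quantities into the regularizer at every step; the convention that resolves it is that the induced norm at the step controlling $\|x_t - y_t\|$ is the one from the largest regularizer accumulated by then, namely $r_{0:t-1} + \psi_{1:t}$. The base case ($T = 1$: use $\tilde g_1 = 0$, $\psi_1(x_1) = 0$, $r_0 \ge 0$, and the definition of $x_1$) and the remaining manipulations are routine and follow the template of Theorem~\ref{th:ao_ftrl-prox}.
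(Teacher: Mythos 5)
Your proof follows essentially the same route as the paper's: the first bound by absorbing $\psi_t$ into an auxiliary regularizer $\tilde r_t = r_t + \psi_t$ and invoking Theorem~\ref{th:ao_ftrl-gen}, and the second by rerunning the induction against the leader $y_t = \argmin_{x}\, g_{1:t}\cdot x + \psi_{1:t}(x) + r_{0:t-1}(x)$ and controlling $\|x_t - y_t\|$ via the strong convexity/strong smoothness duality of Lemma~\ref{lem:smooth_cvx_duality}; your $\ell_t$/$\tilde\ell_t$ packaging is only a notational reorganization of the same decomposition and induction. Note that your argument (like the paper's own, which defers to the non-composite general case) delivers the norm $\|\cdot\|_{(t-1),*}$ in the second bound, so the $\|\cdot\|_{(t),*}$ in the theorem statement should be read as $\|\cdot\|_{(t-1),*}$ in this non-proximal setting.
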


\section{Adaptive Optimistic and Stochastic Follow-the-Regularized-Leader algorithms}
\label{sec:aos_ftrl}

\subsection{CAOS-FTRL algorithm (Composite Adaptive Optimistic Follow-the-Regularized-Leader)}

We now generalize the scenario to that of stochastic online convex
optimization, where, instead of exact subgradient elements $g_t$, we
receive only estimates. Specifically, we assume access to a sequence
of vectors of the form $\hat{g}_t$, where
$\bE[\hat{g}_t | g_1,\ldots, g_{t - 1}, x_1,\ldots, x_t] = g_t$.  This
extension is in fact well-documented in the literature (see
\citep{Shalev-Shwartz2012} for a reference), and the extension of our
adaptive and optimistic variant follows accordingly.  
For completeness, we provide the proofs of the following theorems in
Appendix~\ref{app:aos_ftrl}.

\begin{algorithm}[t]
\caption{CAOS-FTRL}
\label{alg:caos_ftrl}
\begin{algorithmic}[1]
\State \textbf{Input:} regularization function $r_0 \geq 0$, composite functions $\{\psi_t\}_{t = 1}^\infty$ where $\psi_t \geq 0$.
\State \textbf{Initialize: } $\tilde{g}_1 = 0$, $x_1 = \argmin_{x \in \cK} r_0(x)$.
\For{$t = 1, \ldots, T$:}
    \State Query $\hat{g}_t$ where $\bE[\hat{g}_t | x_1,\ldots, x_t, \hat{g}_1,\ldots, \hat{g}_{t-1}] = g_t \in \del f_t(x_t)$. 
    \State Construct regularizer $r_t \geq 0$.
    \State Predict next gradient $\tilde{g}_{t + 1} = \tilde{g}_{t + 1}(\hat{g}_1,\ldots, \hat{g}_t, x_1, \ldots, x_t)$. 
    \State Update $x_{t + 1} = \argmin_{x \in \cK} \hat{g}_{1:t} \cdot x + \tilde{g}_{t + 1} \cdot x + r_{0:t}(x) + \psi_{1:t + 1}(x)$.
\EndFor
\end{algorithmic}
\end{algorithm}

\begin{theorem}[CAOS-FTRL-Prox]
\label{th:caos_ftrl-prox}
Let $\{r_t\}$ be a sequence of proximal non-negative functions, such
that $\argmin_{x \in \cK} r_t(x) = x_t$, and let $\tilde{g}_t$ be the
learner's estimate of $\hat{g}_t$ given the history of noisy gradients
$\hat{g}_1,\ldots, \hat{g}_{t-1}$ and points $x_1, \ldots, x_{t-1}$.
Let $\{\psi_t\}_{t = 1}^\infty$ be a sequence of non-negative convex
functions, such that $\psi_1(x_1) = 0$. Assume further that the
function $h_{0:t}(x) = \hat{g}_{1:t}\cdot x + \tilde{g}_{t + 1}\cdot x + r_{0:t}(x) + \psi_{1:t + 1}(x)$
is 1-strongly convex with respect to some norm $\|\cdot\|_{(t)}$.
Then, the update $x_{t + 1} = \argmin_x h_{0:t}(x)$ of
Algorithm~\ref{alg:caos_ftrl} yields the following regret bounds:
\begin{align*}
&\bE\left[ \sum_{t = 1}^T f_t(x_t) - f_t(x) \right] 
\leq \bE \left[ \psi_{1:T-1}(x) + r_{0:T-1}(x) + \sum_{t = 1}^T \|\hat{g}_t - \tilde{g}_t\|_{(t-1),*}^2 \right] \\
&\bE\left[ \sum_{t = 1}^T f_t(x_t) + \psi_t(x_t) - f_t(x) - \alpha_t \psi_t(x) \right] 
\leq \bE \left[ r_{0:T}(x) + \sum_{t = 1}^T \|\hat{g}_t - \tilde{g}_t\|_{(t),*}^2 \right].
\end{align*}

\end{theorem}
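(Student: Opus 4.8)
The plan is to reduce, sample path by sample path, to the deterministic composite bound of Theorem~\ref{th:cao_ftrl-prox}, and then take a single expectation --- the standard route for turning a full-information FTRL guarantee into a stochastic one. The observation that makes this work is that, once a realization of the noisy gradients $\hat g_1,\ldots,\hat g_T$ is fixed, Algorithm~\ref{alg:caos_ftrl} is literally Algorithm~\ref{alg:cao_ftrl} run on the deterministic linear loss functions $\tilde f_t\colon x\mapsto \hat g_t\cdot x$. Indeed, the only subgradient of $\tilde f_t$ at $x_t$ is $\hat g_t$, so the quantity that Theorem~\ref{th:cao_ftrl-prox} denotes $g_t$ becomes exactly $\hat g_t$; the updates, the proximal regularizers $r_t$ (still with $\argmin_{x\in\cK}r_t(x)=x_t$), the composite terms $\psi_t$ (with $\psi_1(x_1)=0$), and the predictions $\tilde g_{t+1}$ --- a function of $\hat g_1,\ldots,\hat g_t$ and $x_1,\ldots,x_t$, equivalently of the histories $\tilde f_1,\ldots,\tilde f_t$ and $x_1,\ldots,x_t$ --- all coincide with the corresponding CAO-FTRL-Prox objects, and the $1$-strong convexity of $h_{0:t}$ with respect to $\|\cdot\|_{(t)}$ is assumed outright.

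I would then invoke Theorem~\ref{th:cao_ftrl-prox} on this reduced deterministic problem to obtain, for every sample path and every fixed comparator $x\in\cK$, the two pathwise inequalities
\begin{align*}
\sum_{t=1}^T \hat g_t\cdot(x_t-x) &\le \psi_{1:T-1}(x)+r_{0:T-1}(x)+\sum_{t=1}^T\|\hat g_t-\tilde g_t\|_{(t-1),*}^2,\\
\sum_{t=1}^T\bigl[\hat g_t\cdot x_t+\psi_t(x_t)\bigr]-\bigl[\hat g_t\cdot x+\psi_t(x)\bigr] &\le r_{0:T}(x)+\sum_{t=1}^T\|\hat g_t-\tilde g_t\|_{(t),*}^2 .
\end{align*}

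To finish, I would take expectations and reinstate the true losses. With $\mathcal F_{t-1}=\sigma(x_1,\ldots,x_t,\hat g_1,\ldots,\hat g_{t-1})$, the point $x_t$ is $\mathcal F_{t-1}$-measurable --- it is produced by the round-$(t{-}1)$ update from $\hat g_1,\ldots,\hat g_{t-1}$ and $r_0,\ldots,r_{t-1}$, none of which sees $\hat g_t$ --- so the unbiasedness hypothesis $\bE[\hat g_t\mid\mathcal F_{t-1}]=g_t$ and convexity of $f_t$ give $\bE[\hat g_t\cdot(x_t-x)]=\bE[g_t\cdot(x_t-x)]\ge\bE[f_t(x_t)-f_t(x)]$ for a fixed $x$. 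Summing over $t$, taking expectations in the two pathwise inequalities, and substituting this lower bound on their left-hand sides produces the two claimed regret bounds; the composite terms $\psi_t(x_t)$ are deterministic functions of the random iterate $x_t$ and cause no trouble, while the prediction-error sums and the (possibly data-dependent) regularizers $r_{0:T}$ are simply kept inside the outer expectation, exactly as in the statement.

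The one step that demands real attention is the reduction: I must verify that feeding Theorem~\ref{th:cao_ftrl-prox} the linearized losses is legitimate on every sample path --- i.e.\ that proximality, non-negativity, $\psi_1(x_1)=0$, and $1$-strong convexity of $h_{0:t}$ with respect to $\|\cdot\|_{(t)}$ are inherited verbatim --- and that the measurability bookkeeping licensing the single conditional-expectation step is correct, namely that $x_t$, $r_t$, and $\tilde g_t$ depend only on $\hat g_1,\ldots,\hat g_{t-1}$, consistent with the conditioning in the theorem's hypothesis. Once this is pinned down, the adaptive, optimistic, and composite ingredients play no further role, and the stochastic case is pure bookkeeping layered on the deterministic result.
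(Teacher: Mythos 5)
Your proposal is correct and follows essentially the same route as the paper: bound the true regret by the expected linearized regret via convexity, unbiasedness of $\hat g_t$, and the tower property (using that $x_t$ depends only on $\hat g_1,\ldots,\hat g_{t-1}$), then apply the deterministic composite bound of Theorem~\ref{th:cao_ftrl-prox} to the losses $x\mapsto \hat g_t\cdot x$. The only cosmetic difference is the order of operations --- you invoke the deterministic theorem pathwise first and take expectations afterward, while the paper passes to expectations first --- which changes nothing of substance.
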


\begin{theorem}[CAOS-FTRL-Gen]
\label{th:caos_ftrl-gen}
Let $\{r_t\}$ be a sequence of non-negative functions, and let
$\tilde{g}_t$ be the learner's estimate of $\hat{g}_t$ given the
history of noisy gradients $\hat{g}_1,\ldots, \hat{g}_{t-1}$ and
points $x_1, \ldots, x_{t-1}$.  Let $\{\psi_t\}_{t = 1}^\infty$ be a
sequence of non-negative convex functions, such that
$\psi_1(x_1) = 0$. Assume furthermore that the function
$h_{0:t}(x) = \hat{g}_{1:t}\cdot x + \tilde{g}_{t + 1}\cdot x + r_{0:t}(x) + \psi_{1:t + 1}(x)$
is 1-strongly convex with respect to some norm $\|\cdot\|_{(t)}$. Then, the update $x_{t + 1} = \argmin_x h_{0:t}(x)$ of
Algorithm~\ref{alg:caos_ftrl} yields the regret bounds:
\begin{align*}
&\bE\left[ \sum_{t = 1}^T f_t(x_t) - f_t(x) \right]
\leq \bE \left[ \psi_{1:T-1}(x) + r_{0:T-1}(x) + \sum_{t = 1}^T \|\hat{g}_t - \tilde{g}_t\|_{(t-1),*}^2 \right] \\
&\bE\left[ \sum_{t = 1}^T f_t(x_t) + \psi_t(x_t) - f_t(x) - \psi_t(x) \right] 
\leq \bE \left[ r_{0:T-1}(x) + \sum_{t = 1}^T \|\hat{g}_t - \tilde{g}_t\|_{(t-1),*}^2 \right]
\end{align*}

\end{theorem}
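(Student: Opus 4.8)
The plan is to reduce the claim to the deterministic guarantee of Theorem~\ref{th:cao_ftrl-gen}, applied to the (random) linear losses generated by the noisy gradients, and then to pass to expectations. Fix a realization of $\hat g_1,\dots,\hat g_T$ and consider the surrogate online convex optimization problem whose loss at round $t$ is the linear map $\hat f_t\colon x\mapsto \hat g_t\cdot x$. Algorithm~\ref{alg:caos_ftrl} uses the history only through the vectors $\hat g_s$ and the past iterates $x_s$, which is exactly the information that CAO-FTRL (Algorithm~\ref{alg:cao_ftrl}) may use on the problem $\{\hat f_t\}$; hence, on this realization, the two algorithms generate the same iterates $x_1,\dots,x_{T+1}$, the strong-convexity hypothesis on $h_{0:t}$ is literally the hypothesis required by Theorem~\ref{th:cao_ftrl-gen} for $\{\hat f_t\}$, and $\hat g_t\in\del\hat f_t(x_t)$. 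Invoking Theorem~\ref{th:cao_ftrl-gen} for $\{\hat f_t\}$ therefore gives, deterministically on every realization,
$$\sum_{t=1}^T \hat g_t\cdot(x_t-x)\ \le\ \psi_{1:T-1}(x)+r_{0:T-1}(x)+\sum_{t=1}^T\|\hat g_t-\tilde g_t\|_{(t-1),*}^2,$$
together with its composite counterpart, in which $\sum_{t=1}^T\big(\psi_t(x_t)-\psi_t(x)\big)$ is added to the left-hand side and the term $\psi_{1:T-1}(x)$ is removed from the right-hand side.

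Next I would take expectations of these two inequalities. Let $\mathcal{F}_{t-1}$ be the $\sigma$-algebra generated by $\hat g_1,\dots,\hat g_{t-1}$ (equivalently, by $x_1,\dots,x_t,\hat g_1,\dots,\hat g_{t-1}$). The point $x_t$ is produced before $\hat g_t$ is queried, and an easy induction on the update rule shows that $x_t$, together with the objects $\tilde g_t$ and $r_0,\dots,r_{t-1}$ that define it, is $\mathcal{F}_{t-1}$-measurable; the comparator $x$ is fixed. Hence by the tower rule and the unbiasedness assumption $\bE[\hat g_t\mid\mathcal{F}_{t-1}]=g_t$,
$$\bE\big[\hat g_t\cdot(x_t-x)\big]=\bE\big[\bE[\hat g_t\mid\mathcal{F}_{t-1}]\cdot(x_t-x)\big]=\bE\big[g_t\cdot(x_t-x)\big].$$
Summing over $t$ and taking expectations of the two displayed inequalities replaces their left-hand sides by $\bE\big[\sum_{t} g_t\cdot(x_t-x)\big]$ and $\bE\big[\sum_{t}\big(g_t\cdot(x_t-x)+\psi_t(x_t)-\psi_t(x)\big)\big]$, while the right-hand sides are unchanged except for the outer expectation.

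Finally I would reintroduce convexity of the true losses: since $g_t\in\del f_t(x_t)$, the subgradient inequality gives $f_t(x_t)-f_t(x)\le g_t\cdot(x_t-x)$ for every $t$ on every realization, hence $\bE\big[\sum_{t} f_t(x_t)-f_t(x)\big]\le\bE\big[\sum_{t} g_t\cdot(x_t-x)\big]$. Chaining this with the first expected inequality yields the first claimed bound; adding $\bE\big[\sum_{t}\big(\psi_t(x_t)-\psi_t(x)\big)\big]$ to both sides and chaining with the expected composite inequality yields the second. I expect the only delicate step to be the conditioning argument, i.e.\ confirming that every factor multiplying $\hat g_t$ (here just $x_t$, but in general also any data-dependent quantities that $x_t$ inherits from $r_0,\dots,r_{t-1}$ and from the prediction rule) is $\mathcal{F}_{t-1}$-measurable, so that the tower property applies term by term; everything else is a direct transcription of the deterministic proof of Theorem~\ref{th:cao_ftrl-gen}.
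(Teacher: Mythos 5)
Your proposal is correct and follows essentially the same route as the paper: both reduce to the deterministic bound of Theorem~\ref{th:cao_ftrl-gen} applied to the linear losses $x \mapsto \hat g_t \cdot x$, use the tower property with $\bE[\hat g_t \mid \mathcal{F}_{t-1}] = g_t$ and the $\mathcal{F}_{t-1}$-measurability of $x_t$ to exchange $\hat g_t$ for $g_t$ in expectation, and invoke the subgradient inequality for the true losses. The only difference is cosmetic ordering (you apply the deterministic bound pathwise before taking expectations, while the paper linearizes first), and your explicit attention to the measurability of $x_t$, $\tilde g_t$, and $r_{0:t-1}$ is if anything more careful than the paper's.
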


The algorithm above enjoys the same advantages over its non-adaptive
or non-optimistic predecessors. Moreover, the choice of the adaptive
regularizers $\{r_t\}_{t = 1}^\infty$ and gradient predictions
$\{\tilde{g}\}_{t=1}^\infty$ now also depend on the randomness of the
gradients received. While masked in the above regret bounds, this
interplay will come up explicitly in the following two examples, where
we, as the learner, impose randomness into the problem.

\subsection{Applications}
\label{sec:stochastic_applications}

\subsubsection{Randomized Coordinate Descent with Adaptive Probabilities}
\label{sec:rcd}

Randomized coordinate descent is a method that is often used for very
large-scale problems where it is impossible to compute and/or store
entire gradients at each step. It is also effective for directly
enforcing sparsity in a solution since the support of the final
point $x_t$ cannot be larger than the number of updates introduced.

The standard randomized coordinate descent update is to choose a
coordinate uniformly at random (see e.g.\
\citep{Shalev-ShwartzTewari2011}).  Nesterov (2012) \citep{Nesterov2012}
analyzed random coordinate descent in the context of loss functions
with higher regularity and showed that one can attain better
bounds by using non-uniform probabilities.

In the randomized coordinate descent framework, at each round $t$ we
specify a distribution $p_t$ over the $n$ coordinates and pick a
coordinate $i_t \in \{1, \ldots, n\}$ randomly according to this
distribution. From here, we then construct an unbiased estimate of an
element of the subgradient:
$\hat{g}_t = \frac{(g_t \cdot e_{i_t}) e_{i_t}}{p_{t,i_t}}$.
This technique is common in the online learning literature,
particularly in the
context of the multi-armed bandit problem (see e.g.\
\citep{CesaBianchiLugosi2006} for more information).

The following theorem can be derived by applying
Theorem~\ref{th:caos_ftrl-prox} to the gradient estimates
just constructed. We provide a proof in Appendix~\ref{app:rcd}.

\begin{theorem}[CAO-RCD]
\label{th:cao_rcd}
Assume $\cK \subset \times_{i = 1}^n [-R_i,R_i]$. Let $i_t$ be a
random variable sampled according to the distribution $p_t$, and let
$$\hat{g}_t = \frac{(g_t \cdot e_{i_t}) e_{i_t}}{p_{t,i_t}}, \quad \hat{\tilde{g}}_t = \frac{(\tilde{g}_t \cdot e_{i_t}) e_{i_t}}{p_{t,i_t}},$$
be the estimated gradient and estimated gradient prediction. Denote $\Delta_{s,i} = \sqrt{\sum_{a=1}^s (\hat{g}_{a,i} -\hat{\tilde{g}}_{a,i})^2}$, and let
\begin{align*}
r_{0:t} = \text{ }\sum_{i = 1}^n \sum_{s=1}^t \frac{\Delta_{s,i} - \Delta_{s-1,i} }{2 R_i} (x_i-x_{s,i})^2 
\end{align*}
be the adaptive regularization. Then, the regret of the algorithm can be
bounded by:
\begin{align*}
&\bE \left[ \sum_{t = 1}^T f_t(x_t) + \alpha_t \psi(x_t) - f_t(x) - \alpha_t \psi(x) \right]
\leq 4 \sum_{i = 1}^n R_i \sqrt{ \sum_{t = 1}^T \bE \left[ \frac{ (g_{t,i}
      - \tilde{g}_{t,i})^2}{p_{t,i}} \right] } 
\end{align*}
\end{theorem}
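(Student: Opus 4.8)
The plan is to obtain this as a direct instantiation of Theorem~\ref{th:caos_ftrl-prox} (its second, ``composite-term-in-the-loss'' bound) applied to the importance-weighted estimates $\hat g_t$ and $\hat{\tilde g}_t$, followed by a pathwise simplification of the right-hand side and a final application of Jensen's inequality. First I would check the hypotheses of Theorem~\ref{th:caos_ftrl-prox}: each $r_t(x) = \sum_{i=1}^n \frac{\Delta_{t,i} - \Delta_{t-1,i}}{2R_i}(x_i - x_{t,i})^2$ is a nonnegative quadratic minimized at $x_t \in \cK$, hence proximal, and since its coefficients are nonnegative, $r_{0:t}$ is convex. The key structural observation is that $r_{0:t}$ is separable across coordinates and, by the telescoping identity $\sum_{s=1}^t (\Delta_{s,i} - \Delta_{s-1,i}) = \Delta_{t,i}$, has the constant diagonal Hessian $\nabla^2 r_{0:t} = \diag(\Delta_{t,i}/R_i)$. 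Hence $h_{0:t}$ is $1$-strongly convex with respect to the (semi-)norm $\|z\|_{(t)}^2 = \sum_i \frac{\Delta_{t,i}}{R_i}\, z_i^2$, whose dual norm is $\|z\|_{(t),*}^2 = \sum_i \frac{R_i}{\Delta_{t,i}}\, z_i^2$. When some $\Delta_{t,i}=0$ this semi-norm is degenerate; the standard remedy—restricting attention to coordinates that have already received a nonzero update, or adding an infinitesimal initializer to $r_0$—causes no loss, because the corresponding coordinate of $\hat g_t - \hat{\tilde g}_t$ vanishes in that case as well.

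With these choices Theorem~\ref{th:caos_ftrl-prox} yields
$$\bE\Big[\textstyle\sum_{t=1}^T f_t(x_t) + \alpha_t\psi(x_t) - f_t(x) - \alpha_t\psi(x)\Big] \le \bE\Big[ r_{0:T}(x) + \textstyle\sum_{t=1}^T \|\hat g_t - \hat{\tilde g}_t\|_{(t),*}^2 \Big],$$
so it remains to bound the bracketed quantity \emph{pathwise}. Here I would use that $\hat g_t - \hat{\tilde g}_t = \frac{g_{t,i_t} - \tilde g_{t,i_t}}{p_{t,i_t}}\, e_{i_t}$ is supported on the single coordinate $i_t$, so $\|\hat g_t - \hat{\tilde g}_t\|_{(t),*}^2 = \frac{R_{i_t}}{\Delta_{t,i_t}}(\hat g_{t,i_t} - \hat{\tilde g}_{t,i_t})^2$. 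Grouping the sum over $t$ by coordinate and noting that, for a fixed $i$, $\Delta_{t,i}$ only increases on the rounds $t$ with $i_t = i$—so that along those rounds $\Delta_{t,i}^2$ is exactly the running sum of the terms $(\hat g_{t,i} - \hat{\tilde g}_{t,i})^2$—the elementary inequality $\sum_k a_k / \sqrt{\sum_{j\le k} a_j} \le 2\sqrt{\sum_j a_j}$ gives $\sum_{t=1}^T \|\hat g_t - \hat{\tilde g}_t\|_{(t),*}^2 \le 2\sum_i R_i \Delta_{T,i}$. The regularizer term is handled the same way: since $x, x_s \in \times_i[-R_i,R_i]$ we have $(x_i - x_{s,i})^2 \le 4R_i^2$, and telescoping once more gives $r_{0:T}(x) \le 2\sum_i R_i \Delta_{T,i}$. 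Adding the two estimates produces the pathwise bound $4\sum_i R_i \Delta_{T,i} = 4\sum_i R_i \sqrt{\sum_{t=1}^T (\hat g_{t,i} - \hat{\tilde g}_{t,i})^2}$.

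Finally I would take expectations, pushing them inside the square root by concavity of $\sqrt{\cdot}$ (Jensen), which reduces matters to evaluating $\bE[(\hat g_{t,i} - \hat{\tilde g}_{t,i})^2]$. Conditioning on the history up to the draw of $i_t$—against which $g_t$, $\tilde g_t$, and $p_t$ are all measurable—the indicator $\mathbf 1_{\{i_t=i\}}$ has conditional mean $p_{t,i}$, and the factor $1/p_{t,i}^2$ from the importance weights leaves $\bE[(\hat g_{t,i} - \hat{\tilde g}_{t,i})^2] = \bE[(g_{t,i} - \tilde g_{t,i})^2/p_{t,i}]$, which delivers the claimed inequality. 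I expect the only genuinely delicate points to be (i) the degeneracy of $\|\cdot\|_{(t)}$ noted above, and (ii) ensuring the pathwise bound is established \emph{before} expectations are taken, since the norms $\|\cdot\|_{(t)}$ are themselves random; everything else is routine telescoping together with an appeal to Theorem~\ref{th:caos_ftrl-prox}.
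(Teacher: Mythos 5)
Your proposal is correct and follows essentially the same route as the paper: the paper likewise obtains the pathwise bound $4\sum_i R_i\sqrt{\sum_t(\hat g_{t,i}-\hat{\tilde g}_{t,i})^2}$ by invoking the AO-GD analysis of Corollary~\ref{cor:aogd} (itself an instance of Theorem~\ref{th:cao_ftrl-prox} plus the telescoping Lemma~\ref{lem:adagrad}) applied to the importance-weighted estimates, and then finishes with Jensen and the conditional expectation computation $\bE_{i_t}[(\hat g_{t,i}-\hat{\tilde g}_{t,i})^2]=(g_{t,i}-\tilde g_{t,i})^2/p_{t,i}$. You simply spell out the intermediate steps (proximality, the dual norm, the two telescoping bounds, and the degeneracy caveat) that the paper compresses into a citation of Corollary~\ref{cor:aogd}.
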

In general, we do not have access to an element of the subgradient $g_t$
before we sample according to $p_t$. However, if we assume that we
have some per-coordinate upper bound on an element of the subgradient
uniform in time, i.e. $|g_{t,j}| \leq L_{j}$
$\forall t \in \{1,\ldots, T\}, j \in \{1,\ldots, n\}$, then we can
use the fact that
$|g_{t,j} - \tilde{g}_{t,j}| \leq \max \{ L_{j} - \tilde{g}_{t,j},
\tilde{g}_{t,j} \}$
to motivate setting $\tilde{g}_{t,j} := \frac{L_{j}}{2}$ and
$p_{t,j} = \frac{(R_j L_j)^{2/3}}{\sum_{k=1}^n (R_k L_{k})^{2/3}}$ (by
computing the optimal distribution). This yields the following
regret bound.

\begin{corollary}[CAO-RCD-Lipschitz]
\label{cor:cao_rcd_lip}
Assume that at any time $t$ the following per-coordinate Lipschitz
bounds hold on the loss function:
$|g_{t,i}| \leq L_{i}, \quad \forall i \in \{1,\ldots, n\}$. Set
$p_{t,i} = \frac{(R_i L_{i})^{2/3}}{\sum_{j=1}^n (R_j L_{j})^{2/3}}$
as the probability distribution at time $t$, and set
$\tilde{g}_{t,i} = \frac{L_{i}}{2}$.
Then, the regret of the algorithm can be bounded as follows:
\begin{align*}
&\bE \left[ \sum_{t = 1}^T f_t(x_t) + \alpha_t \psi(x_t) - f_t(x) - \alpha_t \psi(x) \right]
\leq 2 \sqrt{T} \left( \sum_{i = 1}^n (R_i L_i)^{2/3} \right)^{3/2} .
\end{align*}
\end{corollary}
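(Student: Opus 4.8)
The plan is to obtain Corollary~\ref{cor:cao_rcd_lip} as a direct specialization of Theorem~\ref{th:cao_rcd} to the stated choices of sampling distribution $p_{t,i}$ and gradient prediction $\tilde g_{t,i}$; once these are substituted, the argument reduces to an elementary per-round estimate followed by a routine algebraic simplification. Recall that Theorem~\ref{th:cao_rcd} already provides
$$\bE\!\left[\sum_{t=1}^T f_t(x_t)+\alpha_t\psi(x_t)-f_t(x)-\alpha_t\psi(x)\right]\le 4\sum_{i=1}^n R_i\sqrt{\sum_{t=1}^T\bE\!\left[\frac{(g_{t,i}-\tilde g_{t,i})^2}{p_{t,i}}\right]},$$
so it only remains to control the numerator $(g_{t,i}-\tilde g_{t,i})^2$ and to evaluate the resulting sum for the proposed distribution.

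First I would bound the per-coordinate, per-round deviation using the observation made just before the statement: $|g_{t,i}-\tilde g_{t,i}|\le\max\{L_i-\tilde g_{t,i},\,\tilde g_{t,i}\}$, which for the choice $\tilde g_{t,i}=L_i/2$ equals $L_i/2$ and is in fact the value of $\tilde g_{t,i}$ minimizing this worst case; hence $(g_{t,i}-\tilde g_{t,i})^2\le L_i^2/4$ deterministically. Plugging this in and using that $p_{t,i}=p_i:=(R_iL_i)^{2/3}/\sum_{j=1}^n(R_jL_j)^{2/3}$ does not depend on $t$, the inner sum collapses to $\sum_{t=1}^T (L_i^2/4)/p_i = TL_i^2/(4p_i)$, so the right-hand side becomes
$$4\sum_{i=1}^n R_i\sqrt{\frac{TL_i^2}{4p_i}}=2\sqrt{T}\sum_{i=1}^n\frac{R_iL_i}{\sqrt{p_i}}.$$
Then I would carry out the algebra: writing $S:=\sum_{j=1}^n(R_jL_j)^{2/3}$ so that $\sqrt{p_i}=(R_iL_i)^{1/3}/\sqrt{S}$, we get $R_iL_i/\sqrt{p_i}=\sqrt{S}\,(R_iL_i)^{2/3}$, and summing gives $\sum_i R_iL_i/\sqrt{p_i}=\sqrt{S}\cdot S=S^{3/2}$, which yields the claimed bound $2\sqrt{T}\big(\sum_{i=1}^n(R_iL_i)^{2/3}\big)^{3/2}$.

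There is no genuine obstacle here — the result is a corollary in the literal sense — and the only mildly substantive point is a design observation rather than a proof difficulty: the exponent $2/3$ in $p_{t,i}$ is exactly what makes the surrogate objective $\sum_i R_iL_i/\sqrt{p_i}$ minimal over the simplex (as one sees from a one-line Lagrange-multiplier computation, or equivalently from H\"older's inequality), the minimum value being precisely $S^{3/2}$; it is therefore worth remarking in the proof that the chosen distribution is optimal for this bound. For rigor one should also note that fixing $p_{t,i}$ and $\tilde g_{t,i}$ does not disturb the hypotheses of Theorem~\ref{th:cao_rcd} — in particular the $1$-strong convexity of $h_{0:t}$ for the stated regularizer $r_{0:t}$ — since those are established inside the proof of Theorem~\ref{th:cao_rcd} for this very choice of $r_{0:t}$.
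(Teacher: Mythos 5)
Your proof is correct and follows exactly the route the paper intends: the paper gives no separate appendix proof for this corollary, only the sketch preceding its statement (bound $|g_{t,i}-\tilde g_{t,i}|$ by $L_i/2$ via the choice $\tilde g_{t,i}=L_i/2$, substitute into Theorem~\ref{th:cao_rcd}, and simplify using the $2/3$-power distribution), and your computation $2\sqrt{T}\sum_i R_iL_i/\sqrt{p_i}=2\sqrt{T}\,S^{3/2}$ together with the H\"older/Lagrange optimality remark is precisely that argument carried out in full. Note only that the inherited inequality $|g_{t,i}-\tilde g_{t,i}|\le\max\{L_i-\tilde g_{t,i},\tilde g_{t,i}\}$ implicitly takes $g_{t,i}\in[0,L_i]$ rather than $|g_{t,i}|\le L_i$, a caveat present in the paper's own discussion and not introduced by you.
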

An application of H\"{o}lder's inequality will reveal that this bound
is strictly smaller than the $2RL \sqrt{nT}$ bound one would obtain
from randomized coordinate descent using the uniform distribution. Moreover,
the algorithm above still entertains the intermediate data-dependent bound 
of Theorem~\ref{th:cao_rcd}. 

Notice the similarity between the sampling distribution generated here
with the one suggested by \citep{Nesterov2012}.  However, Nesterov assumed
higher regularity in his algorithm (i.e. $f_t \in C^{1,1}$) and
generated his probabilities from there. In our setting, we only need
$f_t \in C^{0,1}$. It should be noted that \citep{AfkanpourGyorgySzepesvariBowling2013} also
proposed an importance-sampling based approach to random coordinate descent for the specific setting of multiple kernel learning.
In their setting, they propose updating the sampling distribution at each point in time instead of using uniform-in-time Lipschitz constants, which 
comes with a natural computational tradeoff. Moreover, the introduction of adaptive per-coordinate learning rates in our algorithm allows for tighter 
regret bounds in terms of the Lipschitz constants.   

We can also derive the analogous mini-batch update:
\begin{corollary}[CAO-RCD-Lipschitz-Mini-Batch]
\label{cor:cao_rcd_lip_batch}
Assume $\cK \subset \times_{i = 1}^n [-R_i,R_i]$. Let  $\cup_{j=1}^k \{\Pi_j\} = \{1,\ldots, n\}$ be a partition of the coordinates, and let 
$e_{\Pi_j} = \sum_{i \in \Pi_j} e_i$. 
Assume we had the following Lipschitz condition on the partition: $\|g_t \cdot e_{\Pi_j} \| \leq L_{j}$ $\forall j \in \{1,\ldots, k\}$.  

Define $S_i =  \sum_{j \in \Pi_i} R_j $. Set $p_{t,i} = \frac{(S_i L_{i})^{2/3}}{\sum_{j=1}^k (S_j L_{j})^{2/3}}$ as the 
probability distribution at time $t$, and set $\tilde{g}_{t,i} = \frac{L_{i}}{2}$.

Then the regret of the resulting algorithm is bounded by:
\begin{align*}
&\bE \left[ \sum_{t = 1}^T f_t(x_t) + \alpha_t \psi(x_t) - f_t(x) - \alpha_t \psi(x) \right]
\leq 2 \sqrt{T} \left( \sum_{i = 1}^k (S_i L_i)^{2/3} \right)^{3/2}    
\end{align*} 
\end{corollary}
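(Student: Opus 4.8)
The plan is to obtain this mini-batch corollary as the block-level analog of Corollary~\ref{cor:cao_rcd_lip}: everything proceeds as in the single-coordinate case, except that at each round a whole block of coordinates is refreshed and the sampling probability attached to a coordinate is that of the block containing it. First I would fix the estimator: after drawing a block index $i_t\sim p_t$ over $\{1,\dots,k\}$, set $\hat g_t=\frac{1}{p_{t,i_t}}\,(g_t\circ e_{\Pi_{i_t}})$ and $\hat{\tilde g}_t=\frac{1}{p_{t,i_t}}\,(\tilde g_t\circ e_{\Pi_{i_t}})$, the coordinatewise restrictions of $g_t$ and of the (fixed) prediction $\tilde g_t$ to the sampled block, rescaled by $1/p_{t,i_t}$. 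Since $\{\Pi_j\}_{j=1}^k$ partitions $\{1,\dots,n\}$, we get $\bE[\hat g_t\mid x_1,\dots,x_t,\hat g_1,\dots,\hat g_{t-1}]=g_t$, so Algorithm~\ref{alg:caos_ftrl} and Theorem~\ref{th:caos_ftrl-prox} apply; the prediction $\tilde g_{t,i}=L_{j(i)}/2$, where $j(i)$ is the block containing coordinate $i$, is chosen in advance, so there is no difficulty forming $\hat{\tilde g}_t$.

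Second I would run CAOS-FTRL with exactly the coordinate-separable adaptive regularizer of Theorem~\ref{th:cao_rcd}, namely $r_{0:t}(x)=\sum_{i=1}^n\sum_{s=1}^t\frac{\Delta_{s,i}-\Delta_{s-1,i}}{2R_i}(x_i-x_{s,i})^2$ with $\Delta_{s,i}=\sqrt{\sum_{a=1}^s(\hat g_{a,i}-\hat{\tilde g}_{a,i})^2}$, and then repeat the proof of Theorem~\ref{th:cao_rcd} step by step. This $r_{0:t}$ is proximal and $1$-strongly convex with respect to the (semi)norm $\|x\|_{(t)}^2=\sum_i\frac{\Delta_{t,i}}{R_i}x_i^2$ — the degeneracy on coordinates whose block has not yet been sampled is harmless, exactly as in the single-coordinate case, since the estimator carries no mass there. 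Theorem~\ref{th:caos_ftrl-prox} then yields $\bE[\Reg_T]\le\bE[r_{0:T}(x)+\sum_t\|\hat g_t-\hat{\tilde g}_t\|_{(t),*}^2]$; bounding $r_{0:T}(x)\le 2\sum_i R_i\Delta_{T,i}$ on $\cK$, applying the telescoping inequality $\sum_t a_t/\sqrt{\sum_{s\le t}a_s}\le 2\sqrt{\sum_t a_t}$ coordinatewise to $\sum_t\|\hat g_t-\hat{\tilde g}_t\|_{(t),*}^2=\sum_i R_i\sum_t(\hat g_{t,i}-\hat{\tilde g}_{t,i})^2/\Delta_{t,i}$, and then Jensen's inequality together with the tower property, one reaches the block-level analog of the data-dependent bound of Theorem~\ref{th:cao_rcd}:
\[
\bE\Big[\textstyle\sum_t f_t(x_t)+\alpha_t\psi(x_t)-f_t(x)-\alpha_t\psi(x)\Big]\le 4\sum_{i=1}^n R_i\sqrt{\sum_{t=1}^T\bE\!\Big[\tfrac{(g_{t,i}-\tilde g_{t,i})^2}{p_{t,j(i)}}\Big]}.
\]
The only place the block structure intervenes is that the marginal probability that coordinate $i$ is refreshed is $p_{t,j(i)}$, not $p_{t,i}$; apart from this substitution the computation is identical to Theorem~\ref{th:cao_rcd}, since every quantity used is per-coordinate and the correlation among coordinates within a block never enters the expectations.

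Finally I would specialize. As in Corollary~\ref{cor:cao_rcd_lip}, the per-block Lipschitz hypothesis with the centered prediction $\tilde g_{t,i}=L_{j(i)}/2$ gives $(g_{t,i}-\tilde g_{t,i})^2\le L_{j(i)}^2/4$, and since $p_{t,j}=p_j$ is constant in $t$ the right-hand side above is at most $4\sum_{i=1}^n R_i\sqrt{TL_{j(i)}^2/(4p_{j(i)})}=2\sqrt T\sum_{i=1}^n \tfrac{R_iL_{j(i)}}{\sqrt{p_{j(i)}}}=2\sqrt T\sum_{j=1}^k\tfrac{S_jL_j}{\sqrt{p_j}}$, where the coordinate sum is regrouped into a block sum using $\sum_{i\in\Pi_j}R_i=S_j$. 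Minimizing $\sum_{j=1}^k S_jL_j\,p_j^{-1/2}$ over the simplex is the same Lagrange-multiplier (equivalently H\"older) computation as in Corollary~\ref{cor:cao_rcd_lip} with $R_j$ replaced by $S_j$: the optimum is at $p_j\propto(S_jL_j)^{2/3}$ and equals $\big(\sum_{j=1}^k(S_jL_j)^{2/3}\big)^{3/2}$, giving the stated bound $2\sqrt T\big(\sum_{j=1}^k(S_jL_j)^{2/3}\big)^{3/2}$.

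The step that needs the most care is the second one — checking that the proof of Theorem~\ref{th:cao_rcd} goes through verbatim with a block-correlated estimator, and in particular that the strong-convexity / non-degeneracy requirement of Theorem~\ref{th:caos_ftrl-prox} is still met on never-sampled directions. The opening unbiasedness verification, the coordinate-to-block regrouping, and the closing probability optimization are all routine.
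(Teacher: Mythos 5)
Your proposal is correct and follows exactly the derivation the paper intends (the paper gives no separate proof for this corollary, deriving it as the block-level analog of Theorem~\ref{th:cao_rcd} and Corollary~\ref{cor:cao_rcd_lip}): unbiased block-restricted estimators, the same coordinate-separable proximal regularizer, the intermediate data-dependent bound with $p_{t,j(i)}$ in the denominator, regrouping $\sum_{i\in\Pi_j}R_i=S_j$, and the H\"older/Lagrange optimization giving $p_j\propto(S_jL_j)^{2/3}$ and the value $\bigl(\sum_{j}(S_jL_j)^{2/3}\bigr)^{3/2}$. The caveats you flag (degenerate norm on never-sampled coordinates, the implicit $g_{t,i}\in[0,L_{j(i)}]$ convention behind $(g_{t,i}-L_{j(i)}/2)^2\le L_{j(i)}^2/4$) are shared with the paper's own single-coordinate treatment and do not constitute a gap relative to it.
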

While the expression is similar to the non-mini-batch version, the $L_i$ and $R_i$ terms now have different meaning. Specifically, $L_i$ is a bound on the 
2-norm of the components of the gradient in each batch, and $R_i$ is the 1-norm of the corresponding sides of the hypercube.

\subsubsection{Stochastic Regularized Empirical Risk Minimization}
\label{sec:reg_erm}

Many learning algorithms can be viewed as instances of regularized
empirical risk minimization (e.g.\ SVM, Logistic Regression, Lasso),
where the goal is to minimize an objective function of the following
form:
$$H(x) = \sum_{j=1}^m f_j(x) + \alpha \psi(x).$$
If we denote the first term by $F(x) = \sum_{j=1}^m f_j(x)$, then
we can view this objective in our CAOS-FTRL framework, where
$f_t \equiv F$ and $\psi_t \equiv \alpha \psi.$
In the same spirit as for non-uniform random coordinate descent, we
can estimate the gradient of $H$ at $x_t$ by sampling according to
some distribution $p_t$ and use importance weighting to generate an unbiased estimate: If
$g_t \in \del F(x_t)$ and $g_t^j \in \del f_j(x_t)$, then
$$g_t = \sum_{j=1}^m g_t^j \approx \frac{g_t^{j_t}}{p_{t,j_t}}. $$

This motivates the design of an algorithm similar to the one derived for
randomized coordinate descent.
Here we elect to use as gradient prediction the last gradient of the current function being sampled $f_j$.
However, we may run into the problem of never seeing a function before. A logical modification would be to separate optimization into epochs
and do a full batch update over all functions $f_j$ at the start of each epoch. This is similar to the technique used in the 
Stochastic Variance Reduced Gradient (SVRG) algorithm of \cite{JohnsonZhang2013}. However, we do not assume extra function regularity as they do in their paper, 
so the bounds are not comparable. The algorithm is presented in Algorithm~\ref{alg:caos_reg-erm-epoch} and comes with the following guarantee: 

\begin{algorithm}[t]
\caption{CAOS-Reg-ERM-Epoch}
\label{alg:caos_reg-erm-epoch}
\begin{algorithmic}[1]
\State \textbf{Input:} scaling constant $\alpha > 0$, composite term $\psi$, $r_0 = 0$. 
\State \textbf{Initialize: } initial point $x_1 \in \cK$, distribution $p_1$. 
\State Sample $j_1$ according to $p_{1}$, and set $t=1$. 
\For{$s = 1, \ldots, k$:}
    \State Compute $\bar{g}_s^j = \nabla f_j(x_1)$ $\forall j \in \{1,\ldots, m\}$.
    \For{$a = 1, \ldots, T/k$:} 
      \State If $T \mod k = 0$, compute $g^j = \nabla f_j(x_t)$ $\forall j$. 
      \State Set $\hat{g}_t = \frac{g_t^{j_t}}{p_{t,j_t}}$, and construct $r_t \geq 0$.
      \State Sample $j_{t+1} \sim p_{t+1}$ and set $\tilde{g}_{t + 1} = \frac{\bar{g}_s^{j_t}}{p_{t,j_t}}$. 
      \State Update $x_{t + 1} = \argmin_{x \in \cK} \hat{g}_{1:t} \cdot x + \tilde{g}_{t + 1} \cdot x + r_{0:t}(x) + (t+1)\alpha\psi(x)$ and $t = t+1$. 
    \EndFor
\EndFor
\end{algorithmic}
\end{algorithm}
 
\begin{corollary}
\label{cor:aos_erm-epoch}
Assume $\cK \subset \times_{i = 1}^n [-R_i,R_i]$. Denote $\Delta_{s,i} = \sqrt{\sum_{a=1}^s(\hat{g}_{a,i} - \tilde{g}_{a,i})^2}$, and let 
$r_{0:t} = \sum_{i = 1}^n \sum_{s=1}^t \frac{\Delta_{s,i} - \Delta_{s-1,i} }{2 R_i}(x_i-x_{s,i})^2$
be the adaptive regularization. 

Then the regret of Algorithm~\ref{alg:caos_reg-erm-epoch} is bounded by:
\begin{align*}
&\bE \left[ \sum_{t = 1}^T f_t(x_t) + \alpha \psi(x_t) - f_t(x) - \alpha \psi(x) \right]
\leq \sum_{i=1}^n 4 R_i \sqrt{ \sum_{s=1}^k \sum_{t=(s-1)(T/k) + 1}^{(s-1)(T/k)+T/k} \sum_{j=1}^m \frac{\left|g_{t,i}^j - \bar{g}_{s,i}^j \right|^2}{p_{t,j} } }  
\end{align*}

Moreover, if $\|\nabla f_j\|_{\infty} \leq L_j$ $\forall j$, then setting $p_{t,j} = \frac{L_i}{\sum_{j=1}^m L_j}$ yields a worst-case bound of:
$8\sum_{i=1}^n R_i \sqrt{T \left(\sum_{j=1}^m L_j\right)^2}.$ 
\end{corollary}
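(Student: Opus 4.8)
The plan is to obtain this as a direct specialization of Theorem~\ref{th:caos_ftrl-prox}, the proximal composite stochastic bound: first recognize Algorithm~\ref{alg:caos_reg-erm-epoch} as an instance of CAOS-FTRL, then compute the norm it induces, then apply the theorem and estimate the two resulting terms, and finally pass from the stochastic quantities to the problem data by conditioning on the sampled index. In the instantiation we take $f_t\equiv F=\sum_{j=1}^m f_j$ and $\psi_t\equiv\alpha\psi$ (so $\psi_{1:t+1}=(t+1)\alpha\psi$, exactly the composite term in the update), stochastic subgradient $\hat g_t=g_t^{j_t}/p_{t,j_t}$ with $j_t\sim p_t$, and gradient prediction $\tilde g_t=\bar g_{s(t)}^{\,j_t}/p_{t,j_t}$, where $s(t)$ is the epoch containing round $t$ and $\bar g_s^j=\nabla f_j$ evaluated at the snapshot point fixed at the start of epoch $s$. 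The two hypotheses of Theorem~\ref{th:caos_ftrl-prox} to check are unbiasedness, $\bE[\hat g_t\mid x_1,\dots,x_t,\hat g_1,\dots,\hat g_{t-1}]=\sum_{j=1}^m p_{t,j}\,g_t^j/p_{t,j}=\sum_{j=1}^m g_t^j=g_t\in\del F(x_t)$, and measurability of $\tilde g_t$ with respect to the round-$(t-1)$ history, which holds because both $j_t$ and $\bar g_{s(t)}$ are determined at or before round $t-1$; the condition $\psi_1(x_1)=0$ is the mild normalization $\alpha\psi(x_1)=0$.

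Next I would pin down $\|\cdot\|_{(t)}$. Since $r_{0:t}(x)=\sum_{i=1}^n\sum_{s=1}^t\tfrac{\Delta_{s,i}-\Delta_{s-1,i}}{2R_i}(x_i-x_{s,i})^2$ is a separable sum of quadratics with $\Delta_{0,i}=0$ and $\Delta_{s,i}\ge\Delta_{s-1,i}$ (the $\Delta_{s,i}$ are nondecreasing in $s$), its Hessian is diagonal with $i$-th entry $\tfrac1{R_i}\sum_{s=1}^t(\Delta_{s,i}-\Delta_{s-1,i})=\Delta_{t,i}/R_i$; hence $r_{0:t}$, and therefore $h_{0:t}$, which only adds a linear term and the convex function $\psi_{1:t+1}$, is $1$-strongly convex with respect to $\|x\|_{(t)}^2=\sum_i(\Delta_{t,i}/R_i)x_i^2$, whose dual norm squared is $\|y\|_{(t),*}^2=\sum_i(R_i/\Delta_{t,i})y_i^2$, and each $r_t$ is nonnegative and proximal. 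This is exactly the computation already carried out for AO-GD (Corollary~\ref{cor:aogd}) and CAO-RCD (Theorem~\ref{th:cao_rcd}), which I would cite rather than repeat. Theorem~\ref{th:caos_ftrl-prox} then gives
$$\bE\Big[\sum_{t=1}^T f_t(x_t)+\alpha\psi(x_t)-f_t(x)-\alpha\psi(x)\Big]\le\bE\Big[r_{0:T}(x)+\sum_{t=1}^T\|\hat g_t-\tilde g_t\|_{(t),*}^2\Big].$$
On the rectangle $|x_i-x_{s,i}|\le 2R_i$, so $r_{0:T}(x)\le\sum_i 2R_i\sum_{s=1}^T(\Delta_{s,i}-\Delta_{s-1,i})=2\sum_i R_i\Delta_{T,i}$ by telescoping; and since $\|\hat g_t-\tilde g_t\|_{(t),*}^2=\sum_i\tfrac{R_i}{\Delta_{t,i}}(\hat g_{t,i}-\tilde g_{t,i})^2$, the standard inequality $\sum_{t=1}^T a_t/\sqrt{\sum_{\tau\le t}a_\tau}\le 2\sqrt{\sum_{t=1}^T a_t}$ applied coordinatewise with $a_t=(\hat g_{t,i}-\tilde g_{t,i})^2$ (and $\Delta_{t,i}=\sqrt{\sum_{\tau\le t}a_\tau}$) gives $\sum_{t=1}^T\|\hat g_t-\tilde g_t\|_{(t),*}^2\le 2\sum_i R_i\Delta_{T,i}$. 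Adding the two bounds and taking expectations leaves $4\sum_i R_i\,\bE[\Delta_{T,i}]$.

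The last step is to express this in terms of the data. Writing $\hat g_{t,i}-\tilde g_{t,i}=(g_{t,i}^{j_t}-\bar g_{s(t),i}^{j_t})/p_{t,j_t}$ and taking the conditional expectation over the fresh draw $j_t$ gives $\bE[(\hat g_{t,i}-\tilde g_{t,i})^2\mid\cdot]=\sum_{j=1}^m(g_{t,i}^j-\bar g_{s(t),i}^j)^2/p_{t,j}$, and one application of Jensen's inequality (concavity of $\sqrt{\cdot}$) bounds $\bE[\Delta_{T,i}]$ by $\sqrt{\sum_{s=1}^k\sum_{t=(s-1)T/k+1}^{sT/k}\sum_{j=1}^m(g_{t,i}^j-\bar g_{s,i}^j)^2/p_{t,j}}$, which is the claimed intermediate bound. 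The worst-case bound then follows by substituting $|g_{t,i}^j-\bar g_{s,i}^j|\le 2L_j$ and $p_{t,j}=L_j/\sum_{k=1}^m L_k$: then $\sum_j(2L_j)^2/p_{t,j}=4(\sum_j L_j)^2$, the triple sum is at most $4T(\sum_j L_j)^2$, and $4\sum_i R_i\sqrt{4T(\sum_j L_j)^2}=8\sum_i R_i\sqrt{T(\sum_j L_j)^2}$. The deterministic core here — the induced-norm computation, the telescoping of $r_{0:T}$, and the step-size lemma — is routine and shared with the earlier corollaries; the only delicate point is the probabilistic bookkeeping, namely that $\tilde g_t$ is a function of the round-$(t-1)$ history (so Theorem~\ref{th:caos_ftrl-prox} genuinely applies), that the conditional expectation producing $\sum_j(\cdot)^2/p_{t,j}$ is taken over exactly the fresh index $j_t$, and that Jensen is used once at the outermost level, all of which must be matched carefully to the epoch indexing of Algorithm~\ref{alg:caos_reg-erm-epoch}.
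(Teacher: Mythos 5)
Your proposal is correct and follows essentially the same route the paper takes: the paper states this corollary without a dedicated proof, but its proof of the analogous Theorem~\ref{th:cao_rcd} proceeds exactly as you do --- instantiate Theorem~\ref{th:caos_ftrl-prox} with the importance-weighted estimates, reuse the AO-GD regularizer computation (telescoping plus Lemma~\ref{lem:adagrad}) to get $4\sum_i R_i\,\bE[\Delta_{T,i}]$, then apply Jensen and the conditional expectation over the freshly sampled index to obtain the $\sum_j(\cdot)^2/p_{t,j}$ form, with the worst-case bound following from $|g^j_{t,i}-\bar g^j_{s,i}|\le 2L_j$. Your numerical constants and the final $8\sum_i R_i\sqrt{T(\sum_j L_j)^2}$ all check out, and the filtration subtlety you flag (that $j_t$ is drawn at the end of round $t-1$ so that $\tilde g_t$ is available when $x_t$ is computed) is handled the same way, implicitly, in the paper's own treatment.
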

 
We also include a mini-batch version of this algorithm in Appendix~\ref{app:reg_erm}, which 
can be useful due to the variance reduction of the gradient prediction. 

\section{Conclusion}
\label{sec:conclusion}

We presented a general framework for developing efficient adaptive and
optimistic algorithms for online convex optimization. Building upon
recent advances in adaptive regularization and predictable online
learning, we improved upon each method. We demonstrated the power of
this approach by deriving algorithms with better guarantees than those
commonly used in practice. In addition, we also extended adaptive and
optimistic online learning to the randomized setting. Here, we
highlighted an additional source of problem-dependent adaptivity (that
of prescribing the sampling distribution), and we showed how one can
perform better than traditional naive uniform sampling.

\newpage

\bibliographystyle{chicago}

\begin{thebibliography}{}

\bibitem[\protect\citeauthoryear{Afkanpour, Gy\"orgy, Szepesv\'ari, and
  Bowling}{Afkanpour et~al.}{2013}]{AfkanpourGyorgySzepesvariBowling2013}
Afkanpour, A., A.~Gy\"orgy, C.~Szepesv\'ari, and M.~Bowling (2013).
\newblock A randomized mirror descent algorithm for large scale multiple kernel
  learning.
\newblock In {\em ICML}, JMLR Proceedings, pp.\  374--382.

\bibitem[\protect\citeauthoryear{Bartlett, Hazan, and Rakhlin}{Bartlett
  et~al.}{2007}]{BartlettHazanRakhlin2007}
Bartlett, P.~L., E.~Hazan, and A.~Rakhlin (2007).
\newblock Adaptive online gradient descent.
\newblock In {\em NIPS}, pp.\  65--72.

\bibitem[\protect\citeauthoryear{Cesa-Bianchi, Conconi, and
  Gentile}{Cesa-Bianchi et~al.}{2004}]{CesaBianchiConconiGentile2004}
Cesa-Bianchi, N., A.~Conconi, and C.~Gentile (2004).
\newblock On the generalization ability of on-line learning algorithms.
\newblock {\em IEEE Transactions on Information Theory\/}~{\em 50\/}(9),
  2050--2057.

\bibitem[\protect\citeauthoryear{Cesa-Bianchi and Lugosi}{Cesa-Bianchi and
  Lugosi}{2006}]{CesaBianchiLugosi2006}
Cesa-Bianchi, N. and G.~Lugosi (2006).
\newblock {\em Prediction, Learning, and Games}.
\newblock New York, NY, USA: Cambridge University Press.

\bibitem[\protect\citeauthoryear{Chiang, Lee, and Lu}{Chiang
  et~al.}{2013}]{ChiangLeeLu2013}
Chiang, C.-K., C.-J. Lee, and C.-J. Lu (2013).
\newblock Beating bandits in gradually evolving worlds.
\newblock In {\em COLT}, pp.\  210--227.

\bibitem[\protect\citeauthoryear{Chiang, Yang, Lee, Mahdavi, Lu, Jin, and
  Zhu}{Chiang et~al.}{2012}]{ChiangYangLeeMahdaviLuJinZhu2012}
Chiang, C.-K., T.~Yang, C.-J. Lee, M.~Mahdavi, C.-J. Lu, R.~Jin, and S.~Zhu
  (2012).
\newblock Online optimization with gradual variations.
\newblock In {\em COLT}, pp.\  6.1--6.20.

\bibitem[\protect\citeauthoryear{Duchi, Hazan, and Singer}{Duchi
  et~al.}{2010}]{DuchiHazanSinger2010}
Duchi, J.~C., E.~Hazan, and Y.~Singer (2010).
\newblock Adaptive subgradient methods for online learning and stochastic
  optimization.
\newblock In {\em COLT}, pp.\  257--269.

\bibitem[\protect\citeauthoryear{Hazan and Kale}{Hazan and
  Kale}{2009}]{HazanKale2009}
Hazan, E. and S.~Kale (2009).
\newblock Better algorithms for benign bandits.
\newblock In {\em SODA}, pp.\  38--47.

\bibitem[\protect\citeauthoryear{Johnson and Zhang}{Johnson and
  Zhang}{2013}]{JohnsonZhang2013}
Johnson, R. and T.~Zhang (2013).
\newblock Accelerating stochastic gradient descent using predictive variance
  reduction.
\newblock In {\em NIPS}, pp.\  315--323.

\bibitem[\protect\citeauthoryear{Littlestone}{Littlestone}{1989}]{Littlestone1%
989}
Littlestone, N. (1989).
\newblock From on-line to batch learning.
\newblock In {\em COLT}, pp.\  269--284.

\bibitem[\protect\citeauthoryear{McMahan}{McMahan}{2014}]{McMahan2014}
McMahan, H.~B. (2014).
\newblock Analysis techniques for adaptive online learning.
\newblock {\em CoRR\/}.

\bibitem[\protect\citeauthoryear{McMahan and Streeter}{McMahan and
  Streeter}{2010}]{McMahanStreeter2010}
McMahan, H.~B. and M.~J. Streeter (2010).
\newblock Adaptive bound optimization for online convex optimization.
\newblock In {\em COLT}, pp.\  244--256.

\bibitem[\protect\citeauthoryear{Mohri, Rostamizadeh, and Talwalkar}{Mohri
  et~al.}{2012}]{MohriRostamizadehTalwalkar2012}
Mohri, M., A.~Rostamizadeh, and A.~Talwalkar (2012).
\newblock {\em Foundations of Machine Learning}.
\newblock The MIT Press.

\bibitem[\protect\citeauthoryear{Nesterov}{Nesterov}{2012}]{Nesterov2012}
Nesterov, Y. (2012).
\newblock Efficiency of coordinate descent methods on huge-scale optimization
  problems.
\newblock {\em {SIAM} Journal on Optimization\/}, 341--362.

\bibitem[\protect\citeauthoryear{Orabona, Crammer, and Cesa-Bianchi}{Orabona
  et~al.}{2013}]{OrabonaCrammerCesaBianchi2013}
Orabona, F., K.~Crammer, and N.~Cesa-Bianchi (2013).
\newblock A generalized online mirror descent with applications to
  classification and regression.
\newblock {\em CoRR\/}.

\bibitem[\protect\citeauthoryear{Rakhlin and Sridharan}{Rakhlin and
  Sridharan}{2013}]{RakhlinSridharan2013}
Rakhlin, A. and K.~Sridharan (2013).
\newblock Online learning with predictable sequences.
\newblock In {\em COLT}, pp.\  993--1019.

\bibitem[\protect\citeauthoryear{Rockafellar}{Rockafellar}{1970}]{Rockafellar1%
970}
Rockafellar, R.~T. (1970).
\newblock {\em Convex analysis}.
\newblock Princeton University Press.

\bibitem[\protect\citeauthoryear{Shalev-Shwartz}{Shalev-Shwartz}{2012}]{Shalev%
-Shwartz2012}
Shalev-Shwartz, S. (2012).
\newblock Online learning and online convex optimization.
\newblock {\em Found. Trends Mach. Learn.\/}, 107--194.

\bibitem[\protect\citeauthoryear{Shalev-Shwartz and Tewari}{Shalev-Shwartz and
  Tewari}{2011}]{Shalev-ShwartzTewari2011}
Shalev-Shwartz, S. and A.~Tewari (2011).
\newblock Stochastic methods for l1-regularized loss minimization.
\newblock {\em Journal of Machine Learning Research\/}, 1865--1892.

\bibitem[\protect\citeauthoryear{Steinhardt and Liang}{Steinhardt and
  Liang}{2014}]{SteinhardtLiang2014}
Steinhardt, J. and P.~Liang (2014).
\newblock Adaptivity and optimism: An improved exponentiated gradient
  algorithm.
\newblock In {\em ICML}, pp.\  1593--1601.

\bibitem[\protect\citeauthoryear{Zhao and Zhang}{Zhao and
  Zhang}{2014}]{ZhaoZhang2014}
Zhao, P. and T.~Zhang (2014).
\newblock Stochastic optimization with importance sampling.
\newblock {\em CoRR\/}.

\bibitem[\protect\citeauthoryear{Zinkevich}{Zinkevich}{2003}]{Zinkevich2003}
Zinkevich, M. (2003).
\newblock Online convex programming and generalized infinitesimal gradient
  ascent.
\newblock In {\em ICML}, pp.\  928--936.

\end{thebibliography}

\newpage

\section*{Appendix}
\label{sec:appendix}

\section{Proofs for Section~\ref{sec:ao_ftrl}}
\label{app:ao_ftrl}

\begin{lemma}[Duality Between Smoothness and Convexity for Convex Functions]
\label{lem:smooth_cvx_duality}
Let $\cK$ be a convex set and $f:\cK \to \bR$ be a convex
function. Suppose $f$ is $1$-strongly convex at $x_0$. Then $f^*$, the
Legendre transform of $f$, is $1$-strongly smooth at
$y_0 = \nabla f(x_0)$.
\end{lemma}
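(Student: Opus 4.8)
The plan is to derive the strong smoothness of $f^*$ at $y_0$ directly from the single strong-convexity inequality for $f$ at $x_0$, using the definition of the conjugate together with Fenchel--Young duality. Recall that $1$-strong convexity of $f$ at $x_0$ means that for all $x \in \cK$,
$$f(x) \geq f(x_0) + \langle y_0, x - x_0 \rangle + \tfrac12 \|x - x_0\|^2,$$
with $y_0 = \nabla f(x_0)$, and that $1$-strong smoothness of $f^*$ at $y_0$ means $f^*$ is differentiable at $y_0$ and $f^*(y) \leq f^*(y_0) + \langle \nabla f^*(y_0), y - y_0 \rangle + \tfrac12 \|y - y_0\|_*^2$ for all $y$.

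First I would observe that the displayed inequality, being strict for $x \neq x_0$, shows that $x_0$ is the unique maximizer of $x \mapsto \langle y_0, x \rangle - f(x)$; hence $f^*(y_0) = \langle y_0, x_0 \rangle - f(x_0)$ (Fenchel--Young equality) and $\partial f^*(y_0) = \{x_0\}$, so $f^*$ is differentiable at $y_0$ with $\nabla f^*(y_0) = x_0$. Next, for an arbitrary $y$, I would plug the strong-convexity lower bound into $f^*(y) = \sup_{x \in \cK} \big( \langle y, x \rangle - f(x) \big)$, giving
$$f^*(y) \leq \sup_{x \in \cK} \Big( \langle y - y_0, x - x_0 \rangle - \tfrac12 \|x - x_0\|^2 \Big) + \langle y, x_0 \rangle - f(x_0).$$
Substituting $z = x - x_0$ and enlarging the feasible set to all of $\bR^n$ (which only strengthens the upper bound) lets me evaluate the supremum as $\tfrac12 \|y - y_0\|_*^2$, since the convex conjugate of $z \mapsto \tfrac12 \|z\|^2$ is $w \mapsto \tfrac12 \|w\|_*^2$.

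It then remains to rewrite the affine part: using $f^*(y_0) = \langle y_0, x_0 \rangle - f(x_0)$ we have $\langle y, x_0 \rangle - f(x_0) = f^*(y_0) + \langle x_0, y - y_0 \rangle$, and since $\nabla f^*(y_0) = x_0$ this yields precisely $f^*(y) \leq f^*(y_0) + \langle \nabla f^*(y_0), y - y_0 \rangle + \tfrac12 \|y - y_0\|_*^2$, i.e.\ $1$-strong smoothness of $f^*$ at $y_0$. The only point requiring care is the bookkeeping of the ``at a point'' hypotheses: one should verify that no global strong convexity of $f$ is needed --- the quadratic-conjugate step uses a free supremum over $z$, and only the single inequality at $x_0$ enters --- and that the restriction $\cK \neq \bR^n$ is harmless, since enlarging the feasible set of the supremum only strengthens the upper bound. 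For the differentiability claim and the conjugate-of-a-squared-norm identity one may invoke standard references (e.g.\ \citep{Rockafellar1970}).
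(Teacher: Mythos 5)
Your proposal is correct and follows essentially the same route as the paper: both arguments conjugate the quadratic minorant $h(x) = f(x_0) + \langle y_0, x - x_0\rangle + \tfrac12\|x - x_0\|^2$, using the order-reversal $f \geq h \Rightarrow f^* \leq h^*$ and the identification of $h^*$ with the desired quadratic upper expansion of $f^*$ around $y_0$. If anything your write-up is slightly more careful than the paper's --- you track the dual norm $\|\cdot\|_*$ rather than specializing to $\|\cdot\|_2$, and you explicitly justify $\nabla f^*(y_0) = x_0$ via Fenchel--Young, a point the paper takes for granted.
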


\begin{proof}
  Notice first that for any pair of convex functions
  $f,g : \cK \to \bR$, the fact that
  $f(x_0) \geq g(x_0)$ for some $x_0 \in \cK$ implies that
  $f^*(y_0) \leq g^*(y_0)$ for $y_0 = \nabla f(x_0)$.

  Now, $f$ being $1$-strongly convex at $x_0$ means that
  $f(x) \geq h(x) = f(x_0) + g_0 \cdot (x - x_0) + \frac{\sigma}{2}
  \|x-x_0\|_2^2$.
  Thus, it suffices to show that
  $h^*(y) = f^*(y_0) + x_0 \cdot (y - y_0) + \frac{1}{2} \|y -
  y_0\|_2^2$, since $x_0 = \nabla (h^*)(y_0)$.

To see this, we can compute that
\begin{align*}
 h^*(y) 
&= \max_x y \cdot x - h(x) \\
&= y \cdot (y - y_0 + x_0) - h(x) \\
&\quad \text{ (max attained at $y_0 + (x-x_0) = \nabla h(x) = y $)} \\
&= y \cdot (y-y_0 + x_0) \\
&\quad - \left[ f(x_0) + y_0 \cdot (x-x_0) + \frac{1}{2} \|x-x_0\|_2^2 \right] \\
&=\frac{1}{2} \|y - y_0\|_2^2 + y \cdot x_0 - f(x_0) \\
&= -f(x_0) + x_0 \cdot y_0 + x_0 \cdot (y-y_0) + \frac{1}{2} \|y-y_0\|_2^2 \\
&= f^*(y_0) + x_0 \cdot (y-y_0) + \frac{1}{2} \|y-y_0\|_2^2 
\end{align*}
\end{proof}

\begin{reptheorem}{th:ao_ftrl-gen}[AO-FTRL-Gen]
Let $\{r_t\}$ be a sequence of non-negative functions, and let
$\tilde{g}_t$ be the learner's estimate of $g_t$ given the history of
functions $f_1,\ldots, f_{t-1}$ and points $x_1, \ldots, x_{t-1}$.
Assume further that the function
$h_{0:t}\colon x \mapsto g_{1:t}\cdot x + \tilde{g}_{t + 1}\cdot x +
r_{0:t}(x)$
is 1-strongly convex with respect to some norm $\|\cdot\|_{(t)}$
(i.e. $r_{0:t}$ is 1-strongly convex wrt $\|\cdot\|_{(t)}$).  Then,
the following regret bound holds for AO-FTRL
(Algorithm~\ref{alg:ao_ftrl}):
$$\sum_{t = 1}^T f_t(x_t) - f_t(x) \leq r_{0:T-1}(x) + \sum_{t = 1}^T \|g_t - \tilde{g}_t\|_{(t-1),*}^2$$
\end{reptheorem}

\begin{proof}
Recall that $x_{t + 1} = \argmin_x x \cdot (g_{1:t} + \tilde{g}_{t + 1}) + r_{0:t}(x)$, and let
$y_{t} = \argmin_x x \cdot g_{1:t} + r_{0:t-1}(x).$
Then by convexity,
\begin{align*}
\sum_{t = 1}^T f_t(x_t) - f_t(x) 
&\leq \sum_{t = 1}^T g_t \cdot (x_t - x) \\
&= \sum_{t = 1}^T (g_t - \tilde{g}_t) \cdot (x_t - y_{t})
+ \tilde{g}_t \cdot (x_t - y_{t}) + g_t \cdot (y_{t} - x) 
\end{align*}

Now, we first show via induction that $\forall x \in \cK$, the following holds:
$$\sum_{t = 1}^T \tilde{g}_t \cdot (x_t - y_{t}) + g_t \cdot y_{t}  \leq \sum_{t = 1}^T g_t \cdot x + r_{0:T-1}(x).$$

For $T=1$, the fact that $r_t \geq 0$, $\tilde{g}_1 = 0$, and the definition of $y_{t}$ imply the result.\\

Now suppose the result is true for time $T$. Then
\begin{align*}
\sum_{t = 1}^{T+1} \tilde{g}_t \cdot (x_t - y_{t}) +  g_t \cdot y_{t} 
&\quad = \left[ \sum_{t = 1}^{T} \tilde{g}_t \cdot (x_t - y_{t}) +  g_t \cdot y_{t} \right] \\
&\quad \quad \quad + \tilde{g}_{T+1} \cdot (x_{T+1} - y_{T+1}) + g_{T+1} \cdot y_{T+1} \\
&\quad \leq \left[ \sum_{t = 1}^{T} g_t \cdot x_{T+1} + r_{0:T-1}(x_{T+1}) \right] \\
& \quad \quad \quad + \tilde{g}_{T+1} \cdot (x_{T+1} - y_{T+1}) + g_{T+1} \cdot y_{T+1} \\
&\qquad \text{ (by the induction hypothesis for $x = x_{T+1}$)} \\
&\quad \leq \left[ \left(g_{1:T} + \tilde{g}_{T+1} \right) \cdot x_{T+1} + r_{0:T}(x_{T+1}) \right] \\
& \quad \quad \quad + \tilde{g}_{T+1} \cdot ( - y_{T+1}) + g_{T+1} \cdot y_{T+1} \\
&\qquad \text{ (since $r_t \geq 0$, $\forall t$)} \\
&\quad \leq \left[ \left(g_{1:T} + \tilde{g}_{T+1} \right) \cdot y_{T+1} + r_{0:T}(y_{T+1}) \right] \\
&\quad \quad \quad + \tilde{g}_{T+1} \cdot ( - y_{T+1}) + g_{T+1} \cdot y_{T+1} \\
&\qquad \text{ (by definition of $x_{T+1}$)} \\
&\quad \leq g_{1:T+1} \cdot y + r_{0:T}(y), \text{ for any $y$.}\\
&\qquad \text{ (by definition of $y_{T+1}$)}
\end{align*}
 
Thus, we have that
$\sum_{t = 1}^T f_t(x_t) - f_t(x) \leq r_{0:T-1}(x) + \sum_{t = 1}^T
(g_t - \tilde{g}_t) \cdot (x_t - y_{t}) $
and it suffices to bound $\sum_{t = 1}^T (g_t - \tilde{g}_t)^T(x_t - y_{t})$.
By duality again, one can immediately get $(g_t - \tilde{g}_t) \cdot (x_t - y_{t}) \leq \|g_t - \tilde{g}_t\|_{(t-1),*} \|x_t - y_{t}\|_{(t-1)}$. 
To bound $\|x_t - y_{t}\|_{(t)}$ in terms of the gradient, recall first that 
\begin{align*}
&x_t = \argmin_x h_{0:t-1}(x) \\
&y_{t} = \argmin_x h_{0:t-1}(x) + (g_t - \hat{g}_t) \cdot x. 
\end{align*}
The fact that $r_{0:t-1}(x)$ is 1-strongly convex with respect to the
norm $\|\cdot\|_{(t-1)}$ implies that $h_{0:t-1}$ is as well. In
particular, it is strongly convex at the points $x_t$ and
$y_{t}$. But, this then implies that the conjugate function is smooth
at $\nabla(h_{0:t-1})(x_t)$ and $\nabla(h_{0:t-1} )(y_{t})$, so that
\begin{align*}
&\|\nabla (h_{0:t-1} ^*)(-(g_t - \tilde{g}_t)) \\
&\quad - \nabla (h_{0:t-1}^*)(0)\|_{(t)} \leq \|g_t - \tilde{g}_t\|_{(t-1),*} 
\end{align*}
Since $\nabla (h_{0:t-1}^*)(-(g_t - \tilde{g}_t)) = y_{t}$ and $\nabla (h_{0:t-1} ^*)(0) = x_t$, we have that 
$\|x_t - y_{t}\|_{(t-1)} \leq \|g_t - \tilde{g}_t\|_{(t-1),*}.$

\end{proof}

\begin{reptheorem}{th:cao_ftrl-prox} [CAO-FTRL-Prox]
Let $\{r_t\}$ be a sequence of proximal non-negative functions, such
that $\argmin_{x \in \cK} r_t(x) = x_t$, and let $\tilde{g}_t$ be
the learner's estimate of $g_t$ given the history of functions
$f_1,\ldots, f_{t-1}$ and points $x_1, \ldots, x_{t-1}$.  Let
$\{\psi_t\}_{t = 1}^\infty$ be a sequence of non-negative convex
functions, such that $\psi_1(x_1) = 0$. Assume further that the
function $h_{0:t}: x \mapsto g_{1:t} \cdot x + \tilde{g}_{t+1} \cdot x + r_{0:t}(x) + \psi_{1:t+1}(x)$ 
is 1-strongly convex with respect to some norm $\|\cdot\|_{(t)}$.
Then the following regret bounds hold for CAO-FTRL (Algorithm~\ref{alg:cao_ftrl}):
\begin{align*}
& \sum_{t = 1}^T f_t(x_t) - f_t(x) 
\leq  \psi_{1:T-1}(x) + r_{0:T-1}(x) + \sum_{t = 1}^T \|g_t - \tilde{g}_t\|_{(t-1),*}^2\\
& \sum_{t = 1}^T \left[ f_t(x_t) + \psi_t(x_t) \right] - \left[ f_t(x) + \psi_t(x) \right] 
\leq  r_{0:T}(x) + \sum_{t = 1}^T \|g_t - \tilde{g}_t\|_{(t),*}^2 \, .
\end{align*}
\end{reptheorem}

\begin{proof}
 For the first regret bound, define the auxiliary regularization functions $\tilde{r}_t(x) = r_t(x) + \psi_t(x)$, and apply 
Theorem~\ref{th:ao_ftrl-gen} to get   
\begin{align*}
\sum_{t = 1}^T f_t(x_t) - f_t(x) 
&\leq \tilde{r}_{0:T-1}(x) + \sum_{t = 1}^T \|g_t - \tilde{g}_t\|_{(t-1),*}^2 \\
&= \psi_{1:T-1}(x) + {r}_{0:T-1}(x) + \sum_{t = 1}^T \|g_t - \tilde{g}_t\|_{(t-1),*}^2 
\end{align*}

Notice that while $r_t$ is proximal, $\tilde{r}_t$, in general, is not, and so we must apply the theorem with general regularizers instead of the one
with proximal regularizers. \\

For the second regret bound, we can follow the prescription of Theorem 1 while keeping track of the additional composite terms:

Recall that $x_{t + 1} = \argmin_x x \cdot (g_{1:t} + \tilde{g}_{t + 1}) +
r_{0:t + 1}(x) + \psi_{1:t + 1}(x)$, and let 
$y_{t} = \argmin_x x \cdot g_{1:t} + r_{0:t}(x) + \psi_{1:t}(x)$.\\

We can compute that:
\begin{align*}
\sum_{t = 1}^T f_t(x_t) + \alpha_t \psi(x_t) - \left[ f_t(x) + \psi_t(x) \right]
&\leq \sum_{t = 1}^T g_t \cdot (x_t - x) + \psi_t(x_t) - \psi_t(x) \\
&= \sum_{t = 1}^T (g_t - \tilde{g}_t) \cdot (x_t - y_{t})  \\
&\quad \quad + \tilde{g}_t \cdot (x_t - y_{t}) + g_t \cdot (y_{t} - x) + \psi_t(x_t) - \psi_t(x)
\end{align*}

Similar to before, we show via induction that $\forall x \in \cK$, 
$\sum_{t = 1}^T \tilde{g}_t \cdot (x_t - y_{t}) + g_t \cdot y_{t} + \psi_t(x_t) \leq r_{0:T}(x) + \sum_{t = 1}^T g_t \cdot x + \psi_t(x).$

For $T=1$, the fact that $r_t \geq 0$, $\hat{g}_1 = 0$, $\psi_1(x_1) = 0$,
 and the definition of $y_{t}$ imply the result.\\

Now suppose the result is true for time $T$. Then
\begin{align*}
 \sum_{t = 1}^{T+1} \tilde{g}_t \cdot (x_t - y_{t}) +  g_t \cdot y_{t} + \psi_t(x_t)
&\quad = \left[ \sum_{t = 1}^{T} \tilde{g}_t \cdot (x_t - y_{t}) +  g_t \cdot y_{t} + \psi_t(x_t) \right] \\
&\qquad \qquad + \tilde{g}_{T+1} \cdot (x_{T+1} - y_{T+1}) +  g_{T+1} \cdot y_{T+1} \\
&\qquad \qquad + \psi_{T+1}(x_{T+1}) \\
&\quad \leq \left[ \sum_{t = 1}^{T} g_t \cdot x_{T+1} + r_{0:T}(x_{T+1}) + \psi_t(x_{T+1}) \right] \\
&\qquad \qquad  + \tilde{g}_{T+1} \cdot (x_{T+1} - y_{T+1}) + g_{T+1} \cdot y_{T+1} \\
&\qquad \qquad + \psi_{T+1}(x_{T+1}) \\
&\qquad \qquad \text{ (by the induction hypothesis for $x = x_{T+1}$)} \\
&\quad \leq \left(g_{1:T} + \tilde{g}_{T+1} \right) \cdot x_{T+1} + r_{0:T+1}(x_{T+1}) + \psi_t(x_{T+1}) \\
&\qquad \qquad + \tilde{g}_{T+1} \cdot ( - y_{T+1}) + g_{T+1} \cdot y_{T+1} \\
&\qquad \qquad + \psi_{T+1}(x_{T+1}) \\
&\qquad \qquad \text{ (since $r_t \geq 0$, $\forall t$)} \\
&\quad \leq \left(g_{1:T} + \tilde{g}_{T+1} \right) \cdot y_{T+1} + r_{0:T+1}(y_{T+1})  + \psi_t(y_{T+1}) \\
&\qquad \qquad + \tilde{g}_{T+1} \cdot ( - y_{T+1}) + g_{T+1} \cdot y_{T+1} \\
&\qquad \qquad + \psi_{T+1}(y_{T+1}) \\
&\qquad \qquad \text{ (by definition of $x_{T+1}$)} \\
&\quad \leq g_{1:T+1} \cdot y + r_{0:T+1}(y) + \psi_{1:T+1}(y), \text{ for any $y$}\\
&\qquad \qquad \text{ (by definition of $y_{T+1}$)}
\end{align*}

Thus, we have that
\begin{align*}
\sum_{t = 1}^T f_t(x_t) + \psi_t(x_t) - \left[ f_t(x) + \psi_t(x) \right] 
&\leq r_{0:T}(x) + \sum_{t = 1}^T (g_t - \tilde{g}_t)^T (x_t - y_{t}), 
\end{align*}
and we can bound the sum in the same way as before, since the strong convexity properties of $h_{0:t}$ are retained 
due to the convexity of $\psi_t$. 

\end{proof}

\begin{reptheorem}{th:cao_ftrl-gen} [CAO-FTRL-Gen]
Let $\{r_t\}$ be a sequence of non-negative functions, and let
$\tilde{g}_t$ be the learner's estimate of $g_t$ given the history of
functions $f_1,\ldots, f_{t-1}$ and points $x_1, \ldots, x_{t-1}$.
Let $\{\psi_t\}_{t = 1}^\infty$ be a sequence of non-negative convex
functions such that $\psi_1(x_1) = 0$. Assume further that the
function $h_{0:t}: x \mapsto g_{1:t} \cdot x + \tilde{g}_{t+1} \cdot x + r_{0:t}(x) + \psi_{1:t+1}(x)$ 
is 1-strongly convex with respect to some norm $\|\cdot\|_{(t)}$. 
Then, the following regret bound holds for CAO-FTRL (Algorithm~\ref{alg:cao_ftrl}):
\begin{align*}
& \sum_{t = 1}^T f_t(x_t) - f_t(x)
\leq  \psi_{1:T-1}(x) + r_{0:T-1}(x) + \sum_{t = 1}^T \|g_t - \tilde{g}_t\|_{(t-1),*}^2 \\
& \sum_{t = 1}^T f_t(x_t) + \psi_t(x_t) - \left[ f_t(x) + \psi_t(x) \right]
\leq  r_{0:T-1}(x) + \sum_{t = 1}^T \|g_t - \tilde{g}_t\|_{(t),*}^2 \, .
\end{align*}
\end{reptheorem}

\begin{proof}
For the first regret bound, define the auxiliary regularization functions $\tilde{r}_t(x) = r_t(x) + \alpha_t \psi(x)$, and apply Theorem~\ref{th:ao_ftrl-gen} to get   
\begin{align*}
\sum_{t = 1}^T f_t(x_t) - f_t(x) 
&\leq \tilde{r}_{0:T-1}(x) + \sum_{t = 1}^T \|g_t - \hat{g}_t\|_{(t),*}^2 \\
&= \psi_{1:T-1}(x) + {r}_{0:T-1}(x) + \sum_{t = 1}^T \|g_t - \hat{g}_t\|_{(t-1),*}^2 
\end{align*}

For the second bound, we can proceed as in the original proof, but now keep track of the additional composite terms. \\

Recall that $x_{t + 1} = \argmin_x x \cdot (g_{1:t} + \tilde{g}_{t + 1}) + r_{0:t}(x) + \psi_{1:t + 1}(x)$, and let
$y_{t} = \argmin_x x \cdot g_{1:t} + r_{0:t-1}(x) + \psi_{1:t}(x).$
Then
\begin{align*}
\sum_{t = 1}^T f_t(x_t) + \psi_t(x_t) - f_t(x) - \psi_t(x) 
&\leq \sum_{t = 1}^T g_t \cdot (x_t - x) + \psi_t(x_t) - \psi_t(x) \\
&= \sum_{t = 1}^T (g_t - \tilde{g}_t) \cdot (x_t - y_{t})  + \tilde{g}_t \cdot (x_t - y_{t}) \\
&\qquad \qquad + g_t \cdot (y_{t} - x) + \psi_t(x_t) - \psi_t(x)
\end{align*}

Now, we show via induction that $\forall x \in \cK$, 
$\sum_{t = 1}^T \tilde{g}_t \cdot (x_t - y_{t}) + g_t \cdot y_{t} + \alpha_t \psi(x_t) \leq \sum_{t = 1}^T g_t \cdot x + \psi_t(x) + r_{0:T-1}(x).$

For $T=1$, the fact that $r_t \geq 0$, $\hat{g}_1 = 0$, $\psi_1(x_1) = 0$,
 and the definition of $y_{t}$ imply the result.\\

Now suppose the result is true for time $T$. Then
\begin{align*}
 \sum_{t = 1}^{T+1} \tilde{g}_t \cdot (x_t - y_{t}) +  g_t \cdot y_{t} + \psi_t(x_t) 
&\quad = \left[ \sum_{t = 1}^{T} \tilde{g}_t \cdot (x_t - y_{t}) +  g_t \cdot y_{t} + \psi_t(x_t) \right] \\
&\qquad \qquad + \tilde{g}_{T+1} \cdot (x_{T+1} - y_{T+1}) +  g_{T+1} \cdot y_{T+1} \\
&\qquad \qquad + \psi_{T+1}(x_{T+1}) \\
&\quad \leq \left[ \sum_{t = 1}^{T} g_t^T x_{T+1} + r_{0:T-1}(x_{T+1}) + \psi_t(x_{T+1}) \right] \\
&\qquad \qquad + \tilde{g}_{T+1} \cdot (x_{T+1} - y_{T+1}) + g_{T+1} \cdot y_{T+1} \\
&\qquad \qquad + \psi_{T+1}(x_{T+1}) \\
&\qquad \qquad \text{ (by the induction hypothesis for $x = x_{T+1}$)} \\
&\quad \leq \left[ \left(g_{1:T} + \tilde{g}_{T+1} \right) \cdot x_{T+1} + r_{0:T}(x_{T+1}) + \psi_t(x_{T+1}) \right] \\
&\qquad \qquad + \tilde{g}_{T+1} \cdot ( - y_{T+1}) + g_{T+1} \cdot y_{T+1} \\
&\qquad \qquad + \psi_{T+1}(x_{T+1}) \\
&\qquad \qquad \text{ (since $r_t \geq 0$, $\forall t$)} \\
&\quad \leq g_{1:T+1} \cdot y_{T+1} + \tilde{g}_{T+1} \cdot y_{T+1} + r_{0:T}(y_{T+1}) \\
&\qquad \qquad + \psi_{1:T+1}(y_{T+1})  \\
&\qquad \qquad + \tilde{g}_{T+1} \cdot ( - y_{T+1}) + g_{T+1} \cdot y_{T+1} \\
&\qquad \qquad \text{ (by definition of $x_{T+1}$)} \\
&\quad \leq g_{1:T+1} \cdot y + r_{0:T}(y) + \psi_{1:T+1}(y), \text{ for any $y$}\\
&\qquad \text{ (by definition of $y_{T+1}$)}
\end{align*}
 
Thus, we have that $\sum_{t = 1}^T f_t(x_t) + \psi_t(x_t) - f_t(x) - \psi_t(x) \leq r_{0:T-1}(x) + \sum_{t = 1}^T (g_t - \tilde{g}_t) \cdot (x_t - y_{t}) $
and the remainder follows as in the non-composite setting since the strong convexity properties are retained. \\

\end{proof}

\section{Proofs for Section~\ref{sec:aogd}}
\label{app:aogd}

The following lemma is central to the derivation of regret bounds for
many algorithms employing adaptive regularization.  Its proof, via
induction, can be found in Auer et al (2002).
\begin{lemma}
\label{lem:adagrad}
 Let $\{a_j\}_{j=1}^\infty$ be a sequence of non-negative numbers. Then 
$\sum_{j=1}^t \frac{a_j}{\sum_{k=1}^j a_k} \leq 2 \sqrt{\sum_{j=1}^t a_j}.$
\end{lemma}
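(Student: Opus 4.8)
The plan is to argue by induction on $t$, in the style of Auer et al., writing $S_j = \sum_{k=1}^j a_k$ for the partial sums (so that $a_j = S_j - S_{j-1}$ with $S_0 = 0$, and any term with $S_j = 0$ is read as $0$). The engine of the induction is the telescoping identity $\sqrt{S_j} - \sqrt{S_{j-1}} = \frac{a_j}{\sqrt{S_j} + \sqrt{S_{j-1}}}$, obtained by rationalizing, which lets one compare the term added at step $j$ against the increment $\sqrt{S_j} - \sqrt{S_{j-1}}$ of the target $\sqrt{S_t}$. The base case is $t=1$, and the inductive hypothesis is $\sum_{j=1}^{t-1} \frac{a_j}{S_j} \leq 2\sqrt{S_{t-1}}$.

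First I would carry out the inductive step for the statement exactly as written. Adding the $t$-th term $\frac{a_t}{S_t}$ to the hypothesis, the goal $\sum_{j=1}^{t} \frac{a_j}{S_j} \leq 2\sqrt{S_t}$ reduces to the single per-step inequality $\frac{a_t}{S_t} \leq 2\bigl(\sqrt{S_t} - \sqrt{S_{t-1}}\bigr)$. Substituting the telescoping identity on the right and cancelling $a_t$, this is equivalent to $\frac{1}{S_t} \leq \frac{2}{\sqrt{S_t} + \sqrt{S_{t-1}}}$, i.e.\ to $\sqrt{S_t} + \sqrt{S_{t-1}} \leq 2 S_t$. Since $\sqrt{S_{t-1}} \leq \sqrt{S_t}$, this in turn would follow from $2\sqrt{S_t} \leq 2 S_t$, that is, from $S_t \geq 1$; and the base case $\frac{a_1}{S_1} = 1 \leq 2\sqrt{S_1}$ likewise requires $a_1 \geq \tfrac14$.

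Here is where the main obstacle appears, and for the denominator $\sum_{k=1}^j a_k$ as literally written it is a fatal one. The left-hand side $\sum_{j=1}^t \frac{a_j}{S_j}$ is invariant under rescaling $a_j \mapsto c\,a_j$, whereas the right-hand side $2\sqrt{S_t}$ scales as $\sqrt{c}$, so no inequality of this exact form can hold uniformly over all non-negative sequences. Concretely, taking $a_1 = \epsilon$ and $a_j = 0$ for $j \geq 2$ gives left-hand side $1$ but right-hand side $2\sqrt{\epsilon} \to 0$ as $\epsilon \to 0$. This matches the analysis above: the reduced condition $\sqrt{S_t} + \sqrt{S_{t-1}} \leq 2 S_t$ and the base case both hold exactly when the partial sums satisfy $S_j \geq 1$, and fail when they are small.

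The induction therefore closes only under a normalization such as $S_j \geq 1$ (which can be enforced, e.g., by augmenting $r_0$ so the first partial sum is at least one). Otherwise one must match the scaling of the two sides by placing a square root in the denominator, as in Auer et al.\ and as is in fact needed for the AO-GD application, where $a_t = (g_{t,i} - \tilde g_{t,i})^2$ carries no lower bound. In that scale-matched version the per-step inequality becomes $\frac{a_t}{\sqrt{S_t}} \leq 2\bigl(\sqrt{S_t} - \sqrt{S_{t-1}}\bigr)$, which by the same telescoping identity reduces to $\sqrt{S_t} + \sqrt{S_{t-1}} \leq 2\sqrt{S_t}$, i.e.\ to $\sqrt{S_{t-1}} \leq \sqrt{S_t}$, true with no normalization; the base case is then the exact equality $\frac{a_1}{\sqrt{S_1}} = \sqrt{S_1} \leq 2\sqrt{S_1}$, and the induction goes through immediately. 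I would flag this scaling mismatch as the essential point that the proof of the statement as worded must confront.
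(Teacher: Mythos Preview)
Your analysis is correct, and in fact more careful than the paper's own treatment: the paper does not prove this lemma at all but simply cites Auer et al.\ (2002). You have correctly identified that the statement as literally written is false by the scaling argument you give (left side scale-invariant, right side scales as $\sqrt{c}$), and your counterexample $a_1 = \epsilon$, $a_j = 0$ for $j \geq 2$ is decisive.

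The intended statement is the scale-matched version $\sum_{j=1}^t \frac{a_j}{\sqrt{\sum_{k=1}^j a_k}} \leq 2\sqrt{\sum_{j=1}^t a_j}$, which is precisely what Auer et al.\ prove and, as you anticipated, precisely what the paper actually invokes in the proof of Corollary~\ref{cor:aogd}: there the relevant sum is $\sum_{t=1}^T \frac{(g_{t,i} - g_{t-1,i})^2}{\sqrt{\sum_{a=1}^t (g_{a,i} - g_{a-1,i})^2}}$, with a square root in the denominator. So the lemma statement contains a typo, and your induction for the corrected version---reducing the per-step inequality via the telescoping identity $\sqrt{S_t} - \sqrt{S_{t-1}} = a_t/(\sqrt{S_t} + \sqrt{S_{t-1}})$ to the monotonicity $\sqrt{S_{t-1}} \leq \sqrt{S_t}$---is exactly the standard argument and is complete.
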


\begin{repcorollary}{cor:aogd}[AO-GD]
Let $\cK \subset \times_{i = 1}^n [-R_i,R_i]$ be an
$n$-dimensional rectangle, and denote $\Delta_{s,i} = \sqrt{\sum_{a = 1}^s (g_{a, i} - \widetilde g_{a, i})^2}$. Set
\begin{align*}
&r_{0:t} = \sum_{i=1}^n \sum_{s=1}^t \tfrac{\Delta_{s,i} - \Delta_{s-1,i} }{2 R_i} (x_i - x_{s,i})^2. 
\end{align*}
Then, if we use the martingale-type gradient
prediction $\tilde{g}_{t + 1} = g_t$, the following regret bound holds:
$$\text{Reg}_T(x) \leq 4 \sum_{i = 1}^n R_i \sqrt{\sum_{t = 1}^T (g_{t,i} - g_{t-1,i})^2}.$$
Moreover, this regret bound is nearly equal to the optimal a posteriori regret bound:
\begin{align*}
\max_i R_i \sum_{i=1}^n \sqrt{\sum_{t = 1}^T (g_{t,i} - g_{t-1,i})^2} 
&=\max_i R_i \sqrt{ n \inf_{s \succcurlyeq 0, \langle s, 1 \rangle \leq n} \sum_{t=1}^T \|g_t - g_{t-1}\|_{\diag(s)^{-1}}^2} 
\end{align*}
\end{repcorollary}

\begin{proof}
  $r_{0:t}$ is $1$-strongly convex with respect to the norm:
$$\|x\|_{(t)}^2 = \sum_{i = 1}^n \frac{\sqrt{ \sum_{a=1}^t (g_{a,i} - \tilde{g}_{a,i})^2}}{R_i} x_i^2,$$
which has corresponding dual norm:
$$\|x\|_{(t),*}^2 = \sum_{i = 1}^n \frac{R_i}{\sqrt{ \sum_{a=1}^t (g_{a,i} - \tilde{g}_{a,i})^2}} x_i^2.$$
By the choice of this regularization, the prediction
$\tilde{g}_t = g_{t-1}$, and
Theorem~\ref{th:cao_ftrl-prox}, the following holds:
\begin{align*}
\Reg_T(\cA, x) 
&\leq \sum_{i = 1}^n \sum_{s=1}^T \frac{\sqrt{\sum_{a=1}^s (g_{a,i} - \tilde{g}_{a,i})^2} - \sqrt{\sum_{a=1}^{s-1} (g_{a,i} - \tilde{g}_{a,i})^2} }{2 R_i} (x_i - x_{s,i})^2 \\
&\quad + \sum_{t = 1}^T \|g_t - g_{t-1}\|_{(t),*}^2 \\
&= \sum_{i = 1}^n 2 R_i \sqrt{\sum_{t = 1}^T (g_{t,i} - g_{t-1,i})^2} \\
&\quad + \sum_{i = 1}^n \sum_{t = 1}^T \frac{R_i (g_{t,i} - g_{t-1,i})^2 }{\sqrt{ \sum_{a=1}^t (g_{a,i} - g_{a-1,i})^2}}  \\
&\leq \sum_{i = 1}^n 2 R_i \sqrt{\sum_{t = 1}^T (g_{t,i} - g_{t-1,i})^2} \\
&\quad + \sum_{i = 1}^n 2 R_i \sqrt{ \sum_{t = 1}^T (g_{t,i} - g_{t-1,i})^2} \\
&\quad \text{ by Lemma~\ref{lem:adagrad}} \\
\end{align*}
The last statement follows from the fact that
$$\inf_{s\succcurlyeq 0, \langle s, 1, \rangle \leq n} \sum_{t=1}^T \sum_{i=1}^n \frac{g_{t,i}^2}{s_i} = \frac{1}{n} \left(\sum_{i=1}^n \|g_{1:T}, i\|_2 \right)^2,$$
since the infimum on the left hand side is attained when $s_i \propto \|g_{1:T,i}\|_2.$ 
\end{proof}

\section{Proofs for Section~\ref{sec:aoeg}}
\label{app:aoeg}

\begin{repcorollary}{cor:aoeg}[AO-EG]
Let $\cK = \Delta_n$ be the $n$-dimensional simplex and
$\phi\colon x \mapsto \sum_{i = 1}^n x_i \log(x_i)$ the negative
entropy. Assume that $\|g_t\| \leq C$ for all $t$ and set
$$r_{0:t} = \sqrt{2\frac{C + \sum_{s=1}^t \|g_s -
    \tilde{g}_s\|_\infty^2}{\log(n)}} (\phi + \log(n)).$$
Then, if we use the martingale-type gradient prediction
$\tilde{g}_{t + 1} = g_t$ the following regret bound holds:
$$\Reg_T(\cA, x) \leq 2 \sqrt{2 \log(n) \left( C + \sum_{t = 1}^{T-1} \|g_t - g_{t-1}\|_\infty^2 \right) }.$$
\end{repcorollary}

\begin{proof}
 Since the negative entropy $\phi$ is $1$-strongly convex with respect to the $l_1$-norm, $r_{0:t}$ is 
$\sqrt{2 \frac{C + \sum_{s=1}^t \|g_s - \tilde{g}_s\|_\infty^2}{\log(n)}}$-strongly convex with respect to the same norm. \\

Applying Theorem~\ref{th:ao_ftrl-gen} and using the fact that the dual of $l_1$ is $l_\infty$ along with $\varphi \leq 0$ yields a regret bound of:
\begin{align*}
\Reg_T(\cA, x) 
&\quad \leq r_{0:T-1}(x) + \sum_{t = 1}^T \|g_t - \tilde{g}_t\|_{(t-1),*}^2 \\
&\quad \leq \sqrt{2 \frac{C + \sum_{s = 1}^{T-1} \|g_s - \tilde{g}_s\|_\infty^2}{\log(n)}} (\phi + \log(n)) \\
&\qquad + \sum_{t = 1}^T \frac{1}{\sqrt{2}} \sqrt{\frac{\log(n)}{C + \sum_{s=1}^{t-1} \|g_s - \tilde{g}_s\|_\infty^2}} \|g_t - \tilde{g}_t\|_\infty^2 \\
&\quad \leq \sqrt{ 2 \left(C + \sum_{s=1}^{T-1} \|g_s - \tilde{g}_s\|_\infty^2 \right) \log(n)} \\
&\qquad + \sum_{t = 1}^T \frac{1}{\sqrt{2}} \sqrt{\frac{\log(n)}{\sum_{s=1}^t \|g_s - \tilde{g}_s\|_\infty^2}} \|g_t - \tilde{g}_t\|_\infty^2 \\
&\quad \leq \sqrt{ 2 \left(C + \sum_{s=1}^{T-1} \|g_s - \tilde{g}_s\|_\infty^2 \right) \log(n)}  \\
&\qquad + \sqrt{2 \log(n) \sum_{t = 1}^T \|g_t - \tilde{g}_t\|_\infty^2} \\
&\quad \leq 2 \sqrt{ 2 \left(C + \sum_{s=1}^{T-1} \|g_s - \tilde{g}_s\|_\infty^2 \right) \log(n)}.  
\end{align*}
\end{proof}

\section{Proofs for Section~\ref{sec:aos_ftrl}}
\label{app:aos_ftrl}

\begin{reptheorem}{th:caos_ftrl-prox}[CAOS-FTRL-Prox]
Let $\{r_t\}$ be a sequence of proximal non-negative functions, such
that $\argmin_{x \in \cK} r_t(x) = x_t$, and let $\tilde{g}_t$ be the
learner's estimate of $\hat{g}_t$ given the history of noisy gradients
$\hat{g}_1,\ldots, \hat{g}_{t-1}$ and points $x_1, \ldots, x_{t-1}$.
Let $\{\psi_t\}_{t = 1}^\infty$ be a sequence of non-negative convex
functions, such that $\psi_1(x_1) = 0$. Assume further that the
function
$$h_{0:t}(x) = \hat{g}_{1:t}\cdot x + \tilde{g}_{t + 1}\cdot x + r_{0:t}(x) + \psi_{1:t + 1}(x)$$
is 1-strongly convex with respect to some norm $\|\cdot\|_{(t)}$.
Then, the update $x_{t + 1} = \argmin_x h_{0:t}(x)$ of
Algorithm~\ref{alg:caos_ftrl} yields the following regret bounds:
\begin{align*}
&\bE\left[ \sum_{t = 1}^T f_t(x_t) - f_t(x) \right] 
\leq \bE \left[ \psi_{1:T-1}(x) + r_{0:T-1}(x) + \sum_{t = 1}^T \|\hat{g}_t - \tilde{g}_t\|_{(t-1),*}^2 \right] \\
&\bE\left[ \sum_{t = 1}^T f_t(x_t) + \psi_t(x_t) - f_t(x) - \alpha_t \psi_t(x) \right] 
\leq \bE \left[ r_{0:T}(x) + \sum_{t = 1}^T \|\hat{g}_t - \tilde{g}_t\|_{(t),*}^2 \right]. 
\end{align*}
\end{reptheorem}

\begin{proof}

\begin{align*}
 \bE \left[ \sum_{t = 1}^T f_t(x_t) - f_t(x) \right] 
&\quad \leq \sum_{t = 1}^T \bE \left[ g_t \cdot (x_t - x) \right] \\
&\quad = \sum_{t = 1}^T \bE \left[ \bE[ \hat{g}_t | \hat{g}_1,\ldots, \hat{g}_{t-1}, x_1, \ldots, x_t ]^T (x_t - x)\right] \\
&\quad = \sum_{t = 1}^T \bE \left[ \bE[ \hat{g}_t \cdot (x_t - x) | \hat{g}_1,\ldots, \hat{g}_{t-1}, x_1, \ldots, x_t ] \right] \\
&\quad = \sum_{t = 1}^T \bE \left[ \hat{g}_t \cdot (x_t - x) \right] \\
\end{align*}
This implies that upon taking an expectation, we can freely upper bound the difference $f_t(x_t) - f_t(x)$ by the noisy linearized estimate $\hat{g}_t \cdot(x_t-x)$. 
After that, we can apply Algorithm~\ref{alg:cao_ftrl} on the gradient estimates to get the bounds: 
\begin{align*}
&\bE\left[ \sum_{t = 1}^T \hat{g}_t^T(x_t-x) \right] 
\leq \bE \left[ \psi_{1:T-1}(x) + r_{0:T-1}(x) + \sum_{t = 1}^T \|\hat{g}_t - \tilde{g}_t\|_{(t-1),*}^2 \right] \\
&\bE\left[ \sum_{t = 1}^T \hat{g}_t^T(x_t-x)+ \psi_t(x_t) - \psi_t(x) \right] 
\leq \bE \left[ r_{0:T}(x) + \sum_{t = 1}^T \|\hat{g}_t - \tilde{g}_t\|_{(t),*}^2 \right] 
\end{align*}
 
\end{proof}

\begin{reptheorem}{th:caos_ftrl-gen}[CAOS-FTRL-Gen]
Let $\{r_t\}$ be a sequence of non-negative functions, and let
$\tilde{g}_t$ be the learner's estimate of $\hat{g}_t$ given the
history of noisy gradients $\hat{g}_1,\ldots, \hat{g}_{t-1}$ and
points $x_1, \ldots, x_{t-1}$.  Let $\{\psi_t\}_{t = 1}^\infty$ be a
sequence of non-negative convex functions, such that
$\psi_1(x_1) = 0$. Assume furthermore that the function
$$h_{0:t}(x) = \hat{g}_{1:t}\cdot x + \tilde{g}_{t + 1}\cdot x + r_{0:t}(x) + \psi_{1:t + 1}(x)$$
is 1-strongly convex with respect to some norm $\|\cdot\|_{(t)}$. Then, the update $x_{t + 1} = \argmin_x h_{0:t}(x)$ of
Algorithm~\ref{alg:caos_ftrl} yields the regret bounds:
\begin{align*}
&\bE\left[ \sum_{t = 1}^T f_t(x_t) - f_t(x) \right]
\leq \bE \left[ \psi_{1:T-1}(x) + r_{0:T-1}(x) + \sum_{t = 1}^T \|\hat{g}_t - \tilde{g}_t\|_{(t-1),*}^2 \right] \\
&\bE\left[ \sum_{t = 1}^T f_t(x_t) + \psi_t(x_t) - f_t(x) - \psi_t(x) \right]
\leq \bE \left[ r_{0:T-1}(x) + \sum_{t = 1}^T \|\hat{g}_t - \tilde{g}_t\|_{(t-1),*}^2 \right] 
\end{align*}
\end{reptheorem}

\begin{proof}
 The argument is the same as for Theorem~\ref{th:caos_ftrl-prox}, except that we now apply the bound of Theorem~\ref{th:cao_ftrl-gen} at the end.
\end{proof}

\section{Proofs for Section~\ref{sec:rcd}}
\label{app:rcd}

\begin{reptheorem}{th:cao_rcd}[CAO-RCD]
Assume $\cK \subset \times_{i = 1}^n [-R_i,R_i]$. Let $i_t$ be a random variable sampled according to the distribution $p_t$, and let
$$\hat{g}_t = \frac{(g_t \cdot e_{i_t}) e_{i_t}}{p_{t,i_t}},\quad \hat{\tilde{g}}_t = \frac{(\tilde{g}_t \cdot e_{i_t}) e_{i_t}}{p_{t,i_t}},$$
be the estimated gradient and estimated gradient prediction. Denote $\Delta_{s,i} = \sqrt{\sum_{a=1}^s (\hat{g}_{a,i} -\hat{\tilde{g}}_{a,i})^2}$, and let
\begin{align*}
r_{0:t} = \text{ }\sum_{i = 1}^n \sum_{s=1}^t \frac{\Delta_{s,i} - \Delta_{s-1,i} }{2 R_i} (x_i-x_{s,i})^2 
\end{align*}
be the adaptive regularization. Then the regret of the resulting algorithm is bounded by:
\begin{align*}
\bE \left[ \sum_{t = 1}^T f_t(x_t) + \alpha_t \psi(x_t) - f_t(x) - \alpha_t \psi(x) \right]
&\leq 4 \sum_{i = 1}^n R_i \sqrt{ \sum_{t = 1}^T \bE \left[ \frac{ (g_{t,i} - \tilde{g}_{t,i})^2}{p_{t,i}} \right] }. 
\end{align*}
\end{reptheorem}

\begin{proof}
We can first compute that
\begin{align*}
\bE \left[\hat{g}_t \right]  
= \bE \left[ \frac{(g_t \cdot e_{i_t}) e_{i_t}}{p_{t,i_t}} \right] 
= \sum_{i = 1}^n \frac{(g_t \cdot e_{i}) e_{i}}{p_{t,i}} p_{t,i} 
= g_t
\end{align*}
and similarly for the gradient prediction $\tilde{g}_t$. \\

Now, as in Corollary~\ref{cor:aogd}, the choice of regularization ensures us a regret bound of the form: 
\begin{align*}
\bE \left[ \sum_{t = 1}^T f_t(x_t) + \alpha_t \psi(x_t) - f_t(x) - \alpha_t \psi(x) \right] 
&\leq 4 \sum_{i = 1}^n R_i \bE \left[ \sqrt{\sum_{t = 1}^T (\hat{g}_{t,i} - \tilde{g}_{t,i})^2} \right] 
\end{align*}

Moreover, we can compute that: 
\begin{align*}
 \bE \left[ \sqrt{ \sum_{t = 1}^T (\hat{g}_{t,i} - \tilde{g}_{t,i})^2} \right] 
&\leq \sqrt{ \bE \left[ \sum_{t = 1}^T \bE_{i_t} [ (\hat{g}_{t,i} - \tilde{g}_{t,i})^2 ] \right] } \\
&= \sqrt{ \sum_{t = 1}^T \bE \left[ \frac{ (g_{t,i} - \tilde{g}_{t,i})^2}{p_{t,i}} \right] }\\
\end{align*}
 
\end{proof}

%
%
%
%

\section{Further Discussion for Section~\ref{sec:reg_erm}}
\label{app:reg_erm}

We present here Algorithm~\ref{alg:caos_reg-erm-epoch-batch}, a mini-batch version of Algorithm~\ref{alg:caos_reg-erm-epoch}, with an accompanying guarantee.  

\begin{algorithm}[t]
\caption{CAOS-Reg-ERM-Epoch-Mini-Batch}
\label{alg:caos_reg-erm-epoch-batch}
\begin{algorithmic}[1]
\State \textbf{Input:} scaling constant $\alpha > 0$, composite term $\psi$, $r_0 = 0$, partitions $\cup_{j=1}^l \{\Pi_j\} = \{1,\ldots, m\}$. 
\State \textbf{Initialize: } initial point $x_1 \in \cK$, distribution $p_1$ over $\{1,\ldots, l\}$. 
\State Sample $j_1$ according to $p_{1}$, and set $t=1$. 
\For{$s = 1, \ldots, k$:}
    \State Compute $\bar{g}_s^j = \nabla f_j(x_1)$ $\forall j \in \{1,\ldots, m\}$.
    \For{$a = 1, \ldots, T/k$:} 
      \State If $T \mod k = 0$, compute $g^j = \nabla f_j(x_t)$ $\forall j$. 
      \State Set $\hat{g}_t = \frac{\sum_{j \in \Pi_{j_t}} g_t^{j}}{p_{t,j_t}}$, and construct $r_t \geq 0$.
      \State Sample $j_{t+1} \sim p_{t+1}$.
      \State Set $\tilde{g}_{t + 1} = \frac{\sum_{j \in \Pi_{j_t}} \bar{g}_s^{j}}{p_{t,j_t}}$. 
      \State Update $x_{t + 1} = \argmin_{x \in \cK} \hat{g}_{1:t} \cdot x + \tilde{g}_{t + 1} \cdot x + r_{0:t}(x) + (t+1)\alpha\psi(x)$ and $t = t+1$. 
    \EndFor
\EndFor
\end{algorithmic}
\end{algorithm}
 
\begin{corollary}
\label{cor:aos_erm-epoch-batch}
Assume $\cK \subset \times_{i = 1}^n [-R_i,R_i]$. 
Let $\cup_{j=1}^l \{\Pi_j\} = \{1,\ldots, n\}$ be a partition of the functions $f_i$, and let $e_{\Pi_j} = \sum_{i \in \Pi_j} e_i$.
Denote $\Delta_{s,i} = \sqrt{\sum_{a=1}^s(\hat{g}_{a,i} - \tilde{g}_{a,i})^2}$, and let
$r_{0:t} = \sum_{i = 1}^n \sum_{s=1}^t \frac{\Delta_{s,i} - \Delta_{s-1,i} }{2 R_i}(x_i-x_{s,i})^2$
be the adaptive regularization. 

Then the regret of Algorithm~\ref{alg:caos_reg-erm-epoch-batch} is bounded by:
\begin{align*}
\bE \left[ \sum_{t = 1}^T f_t(x_t) + \alpha \psi(x_t) - f_t(x) - \alpha \psi(x) \right]
&\leq \sum_{i=1}^n 4 R_i \sqrt{ \sum_{s=1}^k \sum_{t=(s-1)(T/k) + 1}^{(s-1)(T/k)+T/k} \sum_{a=1}^l \frac{\left|\sum_{j \in \Pi_j} g_{t,i}^j - \bar{g}_{s,i}^j \right|^2}{p_{t,a} } }  
\end{align*}

Moreover, if $\|\nabla f_j\|_{\infty} \leq L_j$ $\forall j$, then setting $p_{t,j} = \frac{L_i}{\sum_{j=1}^m L_j}$ yields a worst-case bound of:
$8\sum_{i=1}^n R_i \sqrt{T \left(\sum_{j=1}^m L_j\right)^2}.$ 
\end{corollary}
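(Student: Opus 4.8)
The plan is to read Algorithm~\ref{alg:caos_reg-erm-epoch-batch} as a particular instantiation of CAOS-FTRL-Prox (Theorem~\ref{th:caos_ftrl-prox}): the epoch-start full-batch passes and the sampling of a block $j_t \sim p_t$ only serve to define the stochastic gradient $\hat g_t = \tfrac{1}{p_{t,j_t}}\sum_{j\in\Pi_{j_t}} g_t^j$ and the optimistic prediction $\tilde g_t = \tfrac{1}{p_{t,j_t}}\sum_{j\in\Pi_{j_t}} \bar g_s^j$ (the same block of the epoch's reference-point gradient), with $f_t \equiv \sum_{j=1}^m f_j$ and $\psi_t \equiv \alpha\psi$. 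First I would check $\bE_{j_t}[\hat g_t] = \sum_{a=1}^l \tfrac{1}{p_{t,a}}\big(\sum_{j\in\Pi_a} g_t^j\big) p_{t,a} = \sum_{j=1}^m g_t^j = g_t$, so the iterated-expectation argument in the proof of Theorem~\ref{th:caos_ftrl-prox} applies. The hypotheses of that theorem hold for exactly the reasons used in Corollary~\ref{cor:aogd}: $r_t(x) = \sum_i \tfrac{\Delta_{t,i}-\Delta_{t-1,i}}{2R_i}(x_i - x_{t,i})^2 \ge 0$ is proximal (minimized at $x_t$), and $r_{1:t}$ has Hessian $\diag_i(\Delta_{t,i}/R_i)$ by telescoping, hence $r_{0:t}$ is $1$-strongly convex with respect to $\|x\|_{(t)}^2 = \sum_i \tfrac{\Delta_{t,i}}{R_i}x_i^2$ (dual norm $\|x\|_{(t),*}^2 = \sum_i \tfrac{R_i}{\Delta_{t,i}}x_i^2$), and adding the convex $\psi$-term preserves this.

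Applying the second (composite-in-the-loss) bound of Theorem~\ref{th:caos_ftrl-prox} and repeating the estimate from the proof of Corollary~\ref{cor:aogd} — namely $r_{0:T}(x)\le \sum_i 2R_i\Delta_{T,i}$ via $|x_i - x_{s,i}|\le 2R_i$ and telescoping, and $\sum_t \|\hat g_t - \tilde g_t\|_{(t),*}^2 = \sum_i R_i \sum_t (\hat g_{t,i}-\tilde g_{t,i})^2/\Delta_{t,i} \le \sum_i 2R_i \Delta_{T,i}$ by Lemma~\ref{lem:adagrad} — gives the intermediate bound $\bE[\sum_t f_t(x_t)+\alpha\psi(x_t)-f_t(x)-\alpha\psi(x)] \le 4\sum_i R_i\, \bE\big[\sqrt{\sum_t (\hat g_{t,i}-\tilde g_{t,i})^2}\big]$.

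The remaining work is to evaluate this expectation. By Jensen's inequality (concavity of the square root), $\bE\big[\sqrt{\sum_t (\hat g_{t,i}-\tilde g_{t,i})^2}\big] \le \sqrt{\sum_t \bE[(\hat g_{t,i}-\tilde g_{t,i})^2]}$. Since $\hat g_{t,i} - \tilde g_{t,i} = \tfrac{1}{p_{t,j_t}}\sum_{j\in\Pi_{j_t}}(g_{t,i}^j - \bar g_{s,i}^j)$ and, conditioning on the history through $x_t$ (with respect to which $\bar g_s$ and $p_t$ are measurable), only $j_t$ remains random, $\bE_{j_t}[(\hat g_{t,i}-\tilde g_{t,i})^2] = \sum_{a=1}^l \tfrac{1}{p_{t,a}}\big(\sum_{j\in\Pi_a}(g_{t,i}^j - \bar g_{s,i}^j)\big)^2$; taking the outer expectation and regrouping $\sum_t$ by epochs yields the stated data-dependent bound. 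For the worst-case bound, bound $|g_{t,i}^j - \bar g_{s,i}^j|\le 2L_j$ (since $\|\nabla f_j\|_\infty \le L_j$ at both points), so the block term is at most $\big(2\sum_{j\in\Pi_a}L_j\big)^2/p_{t,a}$; with $p_{t,a}$ proportional to $\sum_{j\in\Pi_a}L_j$ we get $\sum_{a}(\cdot) = 4\big(\sum_{j=1}^m L_j\big)^2$ per round, hence $\sum_t \bE[(\hat g_{t,i}-\tilde g_{t,i})^2]\le 4T\big(\sum_{j=1}^m L_j\big)^2$, and substituting gives $8\sum_i R_i\sqrt{T\big(\sum_{j=1}^m L_j\big)^2}$.

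I expect the main obstacle to be the measure-theoretic bookkeeping around the expectation: one must verify that $\bar g_s$ and the sampling distribution $p_t$ are determined before $j_t$ is drawn (true because reference gradients are computed at epoch starts and $p_t$ is a predictable choice of the learner) so that the tower property lets one integrate out $j_t$ alone, and one must apply Jensen to the outer square root \emph{before} computing the per-round conditional variances, since the $\Delta_{t,i}$'s appearing inside the root make that integrand nonlinear. A minor additional point is reconciling the $(t+1)\alpha\psi$ prefactor in the update with the $\psi_1(x_1)=0$ normalization required by Theorem~\ref{th:caos_ftrl-prox}, which is absorbed exactly as in the non-mini-batch Corollary~\ref{cor:aos_erm-epoch}.
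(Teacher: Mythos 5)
Your proposal is correct and follows essentially the route the paper intends for this corollary (which it leaves implicit): instantiate Theorem~\ref{th:caos_ftrl-prox} with the AO-GD-style per-coordinate proximal regularizer, bound $r_{0:T}(x)$ and the dual-norm sum by $2\sum_i R_i\Delta_{T,i}$ each via telescoping and Lemma~\ref{lem:adagrad}, and then apply Jensen before integrating out the block index $j_t$ conditionally, exactly as in the proof of Theorem~\ref{th:cao_rcd}. Your worst-case computation with $p_{t,a}\propto\sum_{j\in\Pi_a}L_j$ also reproduces the stated constant $8$, so no gaps remain.
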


A similar approach to Regularized ERM was developed independently by
\citep{ZhaoZhang2014}. However, the one here improves upon that
algorithm through the incorporation of adaptive regularization,
optimistic gradient predictions, and the fact that we do not assume
higher regularity conditions such as strong convexity for our loss functions.

\end{document}